%% file: main.tex
\pdfoutput=1

\documentclass{article}
\usepackage{iclr2027_conference,times}
\input{math_commands.tex}

\usepackage[T1]{fontenc}
\usepackage[hidelinks]{hyperref}
\usepackage{url}
\usepackage{amsmath,amssymb,amsthm}
\usepackage{booktabs,longtable,array}
\usepackage{graphicx}
\usepackage{xcolor}
\usepackage{enumitem}
\usepackage{algorithm}
\usepackage{expl3,xparse}
\usepackage[font=small,skip=6pt]{caption}
\graphicspath{{figs/}}

\input{tables/revision_numbers}

\input{tables/gauss_numbers}   %
\input{tables/holdout_numbers} %
\input{tables/synthetic_dense_numbers} %

\newtheorem{theorem}{Theorem}
\newtheorem{lemma}{Lemma}

\newtheorem{definition}{Definition}
\newtheorem{assumption}{Assumption}
\theoremstyle{remark}

\newcommand{\runin}[1]{\par\noindent\textbf{#1}\quad\ignorespaces}

\newcommand{\tr}{\operatorname{tr}}

\newcommand{\GL}{\mathrm{GL}}

\newcommand{\at}{\tilde\alpha}
\newcommand{\keeplines}[1]{\par\vskip 0pt plus #1\baselineskip\penalty-200\vskip 0pt plus -#1\baselineskip}
\makeatletter
\newcommand{\keeptogether}[1]{\par\penalty-100\begingroup\dimen@=#1\baselineskip\relax
  \ifdim\dimen@>\dimexpr\pagegoal-\pagetotal\relax\vfil\break\fi\endgroup}
\makeatother

\title{Variance Fingerprints:\\ Component-Wise Identification\\ of Content Factors by Augmentation Design}

\author{D. Hudson Smith\\ Clemson University\\ \texttt{dane2@clemson.edu}}
\iclrfinalcopy

\newcommand{\theauthors}{the author}
\newcommand{\Theauthors}{The author}
\newcommand{\theauthorsp}{the author's}

\begin{document}
\maketitle
\lhead{Preprint}

\input{sections/00_abstract}

\input{sections/01_introduction}

\input{sections/02_baseline}

\input{sections/03_identification}

\input{sections/04_learning}

\input{sections/05_experiments}

\input{sections/06_discussion}

\label{end:main}

\input{sections/07_ai_use}

\input{sections/08_reproducibility}

\bibliography{references}
\bibliographystyle{iclr2027_conference}
\clearpage
\appendix
\input{appendices/00_related_work}

\input{appendices/01_proofs}

\input{appendices/02_objective}

\input{appendices/03_implementation}

\input{appendices/04_experiments}

\end{document}

%% file: math_commands.tex
\usepackage{amsmath,amsfonts,bm}

\def\eqref#1{equation~\ref{#1}}

\def\1{\bm{1}}

\DeclareMathAlphabet{\mathsfit}{\encodingdefault}{\sfdefault}{m}{sl}
\SetMathAlphabet{\mathsfit}{bold}{\encodingdefault}{\sfdefault}{bx}{n}

\newcommand{\E}{\mathbb{E}}

\newcommand{\R}{\mathbb{R}}

\newcommand{\Var}{\mathrm{Var}}

\newcommand{\Cov}{\mathrm{Cov}}

%% file: tables/revision_numbers.tex
\newcommand{\sThreeSeedsN}{six}
\newcommand{\sThreeSigOn}{six}
\newcommand{\sThreeSigOff}{zero}
\newcommand{\sThreeAssignedMean}{0.50}
\newcommand{\sThreeOffMean}{-0.12}
\newcommand{\sThreeXsPairs}{15}
\newcommand{\sThreeXsOn}{13}
\newcommand{\sThreeXsOff}{10}

\newcommand{\dsGzeroXsPairs}{three of three}
\newcommand{\amortPlugHalf}{0.0749}
\newcommand{\amortDHalf}{0.0122}

\newcommand{\noiseFloorTwo}{4.93}

\newcommand{\dsMainMccSeeds}{0.933, 0.938, 0.933}

\newcommand{\dsMainKnn}{0.858}

%% file: tables/gauss_numbers.tex
\newcommand{\dsOmZeroMcc}{0.803}
\newcommand{\dsOmZeroKnn}{0.88}
\newcommand{\dsOmZeroNull}{above the mixing null in 1 of 3 runs}

\newcommand{\dsPermNull}{inside the mixing null in every run ($\hat p$ 0.59 to 0.98)}
\newcommand{\dsGzeroMcc}{0.452}

\newcommand{\dsGzeroNull}{above the mixing null in 3 of 3 runs ($\hat p<0.01$; null 99th percentile 0.16 to 0.64)}
\newcommand{\sdOmZeroAssigned}{0.14}

%% file: tables/holdout_numbers.tex
\newcommand{\holdoutN}{2,000}
\newcommand{\holdoutCells}{44}
\newcommand{\holdoutMaxGapMcc}{0.008}
\newcommand{\holdoutMaxGapKnn}{0.029}

\newcommand{\trainDspMccTwo}{0.928}
\newcommand{\trainDspMccHalf}{0.932}

\newcommand{\holdoutDspIdentN}{ten}

\newcommand{\trainGaussDspMcc}{0.935}

\newcommand{\holdoutGaussSthreeMcc}{0.390}
\newcommand{\trainGaussSthreeMcc}{0.397}

\newcommand{\denseHalfShare}{0.90}
\newcommand{\denseHalfAgree}{five}

%% file: tables/synthetic_dense_numbers.tex
\newcommand{\synthDenseMccOff}{0.778}
\newcommand{\synthDenseMccTwo}{0.997}
\newcommand{\synthDenseMccHalf}{0.997}
\newcommand{\synthDenseMccOne}{0.998}
\newcommand{\synthDenseMccOnLo}{0.996}
\newcommand{\synthOracleMccHalf}{0.995}
\newcommand{\synthDenseShareLo}{0.97}
\newcommand{\synthDenseSlopeLo}{0.96}
\newcommand{\synthDenseSlopeHi}{1.02}
\newcommand{\synthDenseAgree}{15}
\newcommand{\synthDenseAgreeN}{15}

%% file: sections/00_abstract.tex
\begin{abstract}
Augmentation-based self-supervised learning can recover the content shared across views without separating its underlying factors into individual coordinates. We ask whether augmentations can be designed to make those factors identifiable. Our approach allows content factors to fluctuate across augmented views while preserving their mean for each source instance. Each instance receives a fixed, recorded set of augmentation strengths that controls how much its factors fluctuate. When factors respond differently to changes in these strengths, their patterns of variation provide information that distinguishes them. Under suitable decoder and support assumptions, we prove that these differences identify individual factors in a conditional Gaussian model, up to permutation, scaling, and translation. We then develop a learner that uses the recorded strengths to estimate these patterns jointly across instances, enabling learning from few views per instance. Synthetic experiments demonstrate near-perfect factor recovery through nonlinear mixing. Controlled dSprites experiments show improved coordinate alignment using augmentations that target known factors, even without telling the learner which augmentation corresponds to which representation coordinate. These results suggest a route to component identification in self-supervised learning: augmentation design can supply identifying information beyond shared content alone.
\end{abstract}

%% file: sections/01_introduction.tex
\section{Introduction}
\label{sec:intro}

Suppose an encoder learns a sprite's horizontal and vertical position. Another learns their sum and difference. Both retain the same information, but the first represents each position through an individual coordinate. Changing either position in the second representation requires changing both coordinates. Predicting position from the whole representation cannot resolve this distinction. We ask what information an augmentation-based self-supervised learning (SSL) procedure can use to select individual content coordinates.

Shared-content SSL theory gives conditions for identifying the content preserved across augmented views, up to an invertible transformation of that block \citep{vonkugelgen2021,daunhawer2023}. In a model where every view preserves the same content block, the sharing pattern does not distinguish one content coordinate from another. Structural identification criteria can supply further distinctions through the variables a factor affects or responds to \citep{lachapelle2026sparsity,zheng2022structural}. Our baseline SSL assumptions need not supply any such distinction: we construct admissible mixtures that preserve the observations.

Component identification requires observations or model restrictions that distinguish otherwise equivalent representations. We ask how an augmentation procedure can supply those distinctions. We allow target factors to fluctuate around a stable mean for each source instance. We assign each instance a fixed vector of augmentation strengths and record it. These strengths control the components' conditional variances through component-specific response functions. We call the response slopes \emph{variance fingerprints}. Distinct fingerprints can distinguish factors even when those factors respond to the same strength variable. Which factors are separated is then decided by the augmentation protocol rather than discovered afterward in the trained encoder.

The conditional Gaussian setting makes the source of identification explicit: components are distinguished by how their variances respond to the supplied strengths. The same conditional model motivates the learner's quadratic instance-ranking score. Our dSprites experiments test this construction with either a prescribed response matrix or a fully learned matrix, while retaining factor-informed transformations and recorded strengths.

We make three contributions. First, we exhibit a conditional Gaussian construction in which constant content covariance permits observationally equivalent coordinate mixtures, so the baseline SSL assumptions leave component identification unresolved. Second, we characterize an augmentation response family that identifies components within a Gaussian comparison class. Distinct variance responses rule out mixing across components, while repeated responses leave mixing within their shared block. Third, we test how a learner uses this information in synthetic and controlled image experiments. The image studies examine matching metadata, variance regularization, learned response coefficients, and new instances. Their supplied information is factor-informed transformations and the recorded strengths, which we disclose alongside the method.

%% file: sections/02_baseline.tex
\section{What the baseline SSL assumptions leave unresolved}
\label{sec:obstruction}
We consider the baseline SSL setting in which learning has access to instance-grouped views but no supplied content-specific auxiliary variables or action information beyond that grouping. In this setting, even conditionally independent content coordinates can admit observationally equivalent mixtures. The construction below makes this ambiguity explicit and motivates the augmentation design of Section~\ref{sec:fingerprints}.

\runin{The baseline SSL setting.}
We assume that training data consist of groups of augmented observations, indexed by instance $i$ and view $r$. Each view draws content from an instance-conditioned law $P_i$ and style from a separate law conditioned on any style-only auxiliary $\tau_{ir}$:
\begin{equation}
 x_{ir}=f(\alpha_{ir},\beta_{ir}),\qquad
 P(\alpha,\beta\mid i,\tau)=P_i(\alpha)\,P_{\mathrm{sty}}(\beta\mid \tau).
 \label{eq:shared}
\end{equation}
Here $f$ is the decoder. Views are independent conditional on their instance and style auxiliaries. The observed quantities are the views, their instance grouping, and any supplied style-only auxiliary; the latent content and style are unobserved. Many SSL approaches assume $P_i=\delta_{c_i}$, so every view has $\alpha_{ir}=c_i$ \citep{vonkugelgen2021}. Our approach requires the more general case. We treat the conditional first moment $m(i)=\E[\alpha_{ir}\mid i]$ as the stable instance-specific content and retain variation around this content as information for component identification.

\input{visual_abstract}

\runin{Worked example: position jitter.}
Hold appearance fixed. For source position $m(i)$ and positive jitter variances $\sigma^2_{i1},\sigma^2_{i2}$, let each view's position satisfy
\begin{equation*}
 \alpha_{ir}=m(i)+\operatorname{diag}(\sigma_{i1},\sigma_{i2})\,\varepsilon_{ir},
 \qquad\varepsilon_{ir}\sim\mathcal N(0,I_2),
\end{equation*}
with independent draws across views. Consider the shared sum/difference transformation
\begin{equation*}
 Q=\frac1{\sqrt2}\begin{pmatrix}1&1\\1&-1\end{pmatrix},\qquad
 Q\operatorname{diag}(\sigma^2_{i1},\sigma^2_{i2})Q^\top
 =\frac12\begin{pmatrix}\sigma^2_{i1}+\sigma^2_{i2}&\sigma^2_{i1}-\sigma^2_{i2}\\\sigma^2_{i1}-\sigma^2_{i2}&\sigma^2_{i1}+\sigma^2_{i2}\end{pmatrix}.
\end{equation*}
If $\sigma^2_{i1}=\sigma^2_{i2}$, this mixture preserves conditional independence. Replacing the decoder by $\tilde f(z)=f(Q^\top z)$ preserves every observation, so the position coordinates are not identified. More generally, if $\sigma^2_{i1}/\sigma^2_{i2}$ is constant across instances, a shared diagonal rescaling followed by mixing preserves independence (Figure~\ref{fig:strength-abstract}a--b). To preview the design in Section~\ref{sec:fingerprints}, assign one recorded scalar strength $s_i$ to each instance and set $\sigma^2_{i1}=e^{s_i}$ and $\sigma^2_{i2}=e^{2s_i}$. The variance ratio changes across instances, and the displayed off-diagonal entries are nonzero whenever $s_i\ne0$. Thus the same mixture violates conditional independence (Figure~\ref{fig:strength-abstract}c). The log-variances have distinct response slopes, $1$ and $2$, despite sharing one strength; Section~\ref{sec:fingerprints} formalizes these responses as variance fingerprints.

Equation~\ref{eq:shared} is unchanged by invertible reparameterizations of content when its law and decoder are transformed together. The question is which reparameterizations remain admissible under the model assumptions.

\begin{definition}[Admissible reparameterization]
\label{def:admissible}
Relative to a specified model class, an invertible content map $\tilde\alpha=h(\alpha)$ acts on a model by transforming its conditional content law and replacing the decoder with
\[
 \tilde f(\tilde\alpha,\beta)=f(h^{-1}(\tilde\alpha),\beta).
\]
The map is shared across all instances and views; style coordinates and the auxiliary law are held fixed. This action preserves every generated view and hence the grouped-view observation law. The action is \emph{admissible} at a model if the model it produces also satisfies the specified assumptions.
\end{definition}
We measure the remaining coordinate ambiguity by distinguishing block identification \citep{vonkugelgen2021} from component-wise identification. For an injective decoder, write $\alpha_\theta(x)$ for the content coordinates obtained by inverting model $\theta$'s decoder on its observational support, and compare models of the same content dimension $n$.

\begin{definition}[Block and component identification]
\label{def:comp}
Relative to a specified model class, content is \emph{block-identified} at the true model $\theta^\star$ if every model $\theta$ in the class with the same observation law satisfies $\alpha_\theta(x)=h(\alpha_{\theta^\star}(x))$ almost surely for an invertible map $h$. It is \emph{component-identified} if every such model satisfies $\alpha_{\theta,j}(x)=h_j\bigl(\alpha_{\theta^\star,\pi(j)}(x)\bigr)$ almost surely for a permutation $\pi$ and invertible scalar maps $h_j$.
\end{definition}
Population prediction from the whole vector is unchanged by invertible mixing, so block identification does not pin down coordinates (Appendix~\ref{app:metrics}).

Invertible content maps act on models as a group, but admissibility is a property at a model, so the admissible maps need not form a subgroup. The Gaussian construction below exhibits a subgroup whose entire orbit remains admissible.

\runin{Conditional independence still permits mixing.}
This ambiguity can persist even when the content coordinates are required to be conditionally independent. Specialize Equation~\ref{eq:shared} to Gaussian instance-conditioned content with a smooth injective decoder and constant positive diagonal covariance. Rotations in whitened coordinates preserve this model class, giving the Gaussian ambiguity of \citet[Proposition~1 and Supplement~B.5]{khemakhem2020ivae}. The style conditionals are held fixed, and the observation law contains the grouped views, instance indices, and any style-only auxiliary, with no recorded content strengths or realized content shifts.

\begin{lemma}[Constant conditional covariance permits mixing]
\label{lem:homo}
Suppose $\alpha\mid i\sim\mathcal N(m(i),\Sigma)$, with $\Sigma$ positive diagonal and constant across instances. The matrices $M_Q=\Sigma^{1/2}Q\Sigma^{-1/2}$, $Q\in O(n)$, form a covariance-preserving subgroup $H_\Sigma$. Since $M_Q\Sigma M_Q^\top=\Sigma$, each transformed model has content means $M_Qm(i)$, the same conditional covariance, and decoder $f_Q(\alpha,\beta)=f(M_Q^{-1}\alpha,\beta)$. Its grouped-view law is unchanged. For $n\ge2$, this orbit contains non-component-wise alternatives.
\end{lemma}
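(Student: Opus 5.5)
The plan is to verify the four assertions of Lemma~\ref{lem:homo} in order: the subgroup property, covariance preservation, invariance of the grouped-view law, and the existence of non-component-wise elements.

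\textbf{Subgroup and covariance preservation.} For the subgroup property, I would note that $Q\mapsto M_Q=\Sigma^{1/2}Q\Sigma^{-1/2}$ is conjugation by the fixed invertible matrix $\Sigma^{1/2}$. It is therefore an injective group homomorphism $O(n)\to\GL(n)$, since $M_QM_{Q'}=M_{QQ'}$, $M_I=I$ and $M_Q^{-1}=M_{Q^\top}$. Its image $H_\Sigma$ is thus a subgroup. Covariance preservation is the one-line computation
\[
M_Q\Sigma M_Q^\top=\Sigma^{1/2}Q\Sigma^{-1/2}\Sigma\Sigma^{-1/2}Q^\top\Sigma^{1/2}=\Sigma^{1/2}QQ^\top\Sigma^{1/2}=\Sigma .
\]
This uses that $\Sigma^{1/2}$ is diagonal and hence symmetric.

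\textbf{Admissibility and invariance of the observation law.} I would apply Definition~\ref{def:admissible} with the linear map $h(\alpha)=M_Q\alpha$. If $\alpha\mid i\sim\mathcal N(m(i),\Sigma)$, then $M_Q\alpha\mid i\sim\mathcal N(M_Qm(i),\Sigma)$. The transformed model therefore again has Gaussian instance-conditioned content with the same positive diagonal covariance, constant across instances. The new decoder $f_Q=f\circ(M_Q^{-1}\times\mathrm{id})$ stays smooth and injective because $M_Q^{-1}$ is a linear bijection. Style laws and auxiliaries are untouched, and views remain conditionally independent given instance and style auxiliaries, since the same deterministic map is applied to each view. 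Hence the transformed model lies in the class. Moreover, $f_Q(M_Q\alpha_{ir},\beta_{ir})=f(\alpha_{ir},\beta_{ir})=x_{ir}$ holds pathwise, so every generated view is unchanged. The grouped-view law is then unchanged, including instance indices and style-only auxiliaries, by the general observation in Definition~\ref{def:admissible}.

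\textbf{Non-component-wise alternatives.} This is the only step needing a choice. For $n\ge2$, take $Q$ to be the rotation by angle $\pi/4$ in coordinates $1,2$ and the identity elsewhere. Then $(M_Q)_{12}=\sigma_1\sigma_2^{-1}Q_{12}\neq0$ and $(M_Q)_{11}=Q_{11}\neq0$, so the first transformed coordinate $\tilde\alpha_1=(M_Q)_{11}\alpha_1+(M_Q)_{12}\alpha_2$ depends nontrivially on two original coordinates. Writing $\alpha_{\theta}(x)=M_Q\alpha_{\theta^\star}(x)$ on the support, I would argue by contradiction. If $\tilde\alpha_1=h_1(\alpha_{\pi(1)})$ almost surely, then a nonconstant linear function of $(\alpha_1,\alpha_2)$ would be almost surely a function of a single coordinate. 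This is impossible because, conditionally on any instance, $(\alpha_1,\alpha_2)$ is a nondegenerate Gaussian with full support on $\R^2$. Indeed, fixing $\alpha_{\pi(1)}$ leaves the other coordinate varying, and it enters $\tilde\alpha_1$ with nonzero coefficient. Thus $M_Q$ is not a permutation-times-coordinatewise map in the sense of Definition~\ref{def:comp}, so component identification fails.

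I expect this last step to be the main obstacle, mainly because it requires care about the almost-sure and support qualifiers.
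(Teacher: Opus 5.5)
Your proposal is correct and follows the paper's proof essentially step for step: the subgroup property via $M_QM_{Q'}=M_{QQ'}$ and $M_Q^{-1}=M_{Q^\top}$, the covariance identity, admissibility through Definition~\ref{def:admissible}, and a $\pi/4$ rotation in one coordinate plane as the non-component-wise witness. The only difference is in the last step. Your ``fix $\alpha_{\pi(1)}$'' argument is a Fubini-type version of the paper's, which instead notes that $X=\sum_{k\ne l}G_{jk}\alpha_k$ is a nondegenerate Gaussian independent of $\alpha_l$ but would have to be a function of it, hence almost surely constant; that formulation covers every choice $l=\pi(1)$, including $l\notin\{1,2\}$, and avoids conditioning on null events.
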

The transformed covariance stays diagonal, so each alternative is again in the class. The orbit consists of observationally equivalent models obtained by these coordinate changes. The members that mix content coordinates show that the stated assumptions do not identify individual factors in the sense of Definition~\ref{def:comp}. Appendix~\ref{app:models} gives the proof.

\runin{Non-identifiability under shared structural constraints.}
The same ambiguity survives additional structural restrictions: Appendix~\ref{app:models} constructs mixtures that preserve identical action-parent relationships \citep{lachapelle2026sparsity} or common decoder support \citep{zheng2022structural,zheng2023sparsity}. In SSL, grouping augmented views by instance therefore does not guarantee component identification even with these additional constraints. In these constructions, content has constant conditional covariance across instances, leaving only the means to vary. Section~\ref{sec:fingerprints} instead designs augmentations whose conditional variance ratios vary across instances, so that mixing content coordinates need not preserve conditional independence.

%% file: visual_abstract.tex
\begin{figure}[t]
\centering
\includegraphics[width=.92\linewidth]{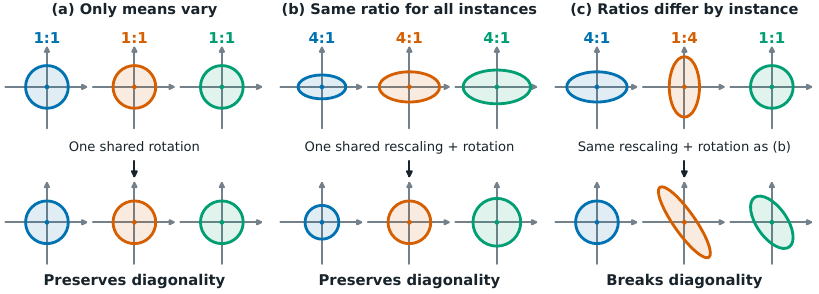}
\caption{\textbf{Variance ratios distinguish coordinates.}
Each color denotes an instance; contours show position distributions centered at the instance means.
Top: original coordinates and horizontal-to-vertical variance ratios.
Bottom: transformed coordinates under one shared linear map per panel.
\textbf{(a)} Equal variances permit orthogonal mixing.
\textbf{(b)} A common ratio permits shared rescaling and mixing.
\textbf{(c)} Under varying ratios, the map from (b) no longer preserves diagonality across instances.
Contours are schematic; nonlinear identification requires Theorem~\ref{thm:fingerprint}'s assumptions.}
\label{fig:strength-abstract}
\end{figure}

%% file: sections/03_identification.tex
\section{Designed variance responses identify components}
\label{sec:fingerprints}

\runin{Stable means, varying variances.}
We treat augmented views as noisy measurements of stable instance content $m(i)$, with conditional variances describing augmentation-induced uncertainty. To distinguish content coordinates, we exploit differences in how their conditional variances respond to augmentation strengths. Specifically, we draw and record one augmentation strength vector $s_i\sim\mathcal N(0,\omega^2I_d)$ per instance, fixed throughout training. As in ordinary SSL, each view is generated by an independent augmentation draw, but here the augmentation distributions vary per instance according to the recorded strengths. The scale $\omega$ controls strength variation across instances. We model each component's response as
\begin{equation}
 \log\Var(\alpha_j\mid i)=v_{ij}=b_{0j}+b_j^\top s_i.
 \label{eq:response}
\end{equation}
The slope $b_j$ is component $j$'s \emph{variance fingerprint}; $B$ collects these vectors as rows. A learner can use the recorded strengths to estimate these fingerprints and distinguish components through their different responses.

\runin{Component identification.}
To establish component identification, we specialize Equation~\ref{eq:shared} to the Gaussian response model of Equation~\ref{eq:response}. The observed data are grouped views $(i,s_i,\{(x_{ir},\tau_{ir})\}_{r=1}^{R})$. The recorded strength $s_i$ is fixed for each instance; the latent content $\alpha_{ir}$ remains unobserved. The factorization and conditional independence of views follow Section~\ref{sec:obstruction}.

We now establish when distinct variance fingerprints identify individual content components. This requires specifying both the competing representations and the population variation available to distinguish them. The following assumptions make these requirements precise.
\begin{assumption}[Gaussian comparison class, decoder, and support]
\label{ass:ivae}
\leavevmode\par
\begin{enumerate}[label=(\roman*),leftmargin=*,nosep]
\item \textbf{Model class.} We compare models with the same content and style dimensions, the factorization in Equation~\ref{eq:shared}, and smooth injective decoders with no additional observation noise. In every model, the content coordinates are conditionally independent nondegenerate Gaussians given the instance. Competing means and diagonal covariances may depend arbitrarily on the instance; competing covariances need not satisfy Equation~\ref{eq:response}.
\item \textbf{Population support.} In the true population, $(m(i),s_i)$ has support $\mathcal M\times\mathcal S$, where $\mathcal M$ affinely spans $\R^n$ and $\mathcal S$ has nonempty interior.
\item \textbf{Style overlap and conditional variation.}
Style coordinates may be dependent and need not follow a parametric distribution. For almost every style-only auxiliary value $t$ under the distribution of $\tau$, the conditional style distributions $P_{\mathrm{sty}}(\cdot\mid\tau=t)$ have the same null sets, and the conditional support of $(m(i),s_i)$ given $\tau=t$ is $\mathcal M\times\mathcal S$.
\end{enumerate}
\end{assumption}
Assumption~\ref{ass:ivae} allows mean variation to distinguish content directions and strength variation to distinguish variance responses, while the style conditions ensure a common content correspondence across auxiliary values. Product support requires coverage of every pairing of content means and augmentation strengths, not statistical independence. Identification is within the Gaussian comparison class, although only the true variances must satisfy Equation~\ref{eq:response}.

Theorem~\ref{thm:fingerprint} characterizes the remaining ambiguity within the Gaussian comparison class: every observationally equivalent model has content coordinates related by an affine map that mixes only components with equal fingerprints. Distinct fingerprints therefore identify individual components, while repeated fingerprints identify blocks.
\keeplines{10}
\begin{theorem}[Identification from variance fingerprints]
\label{thm:fingerprint}
Let the true conditional content law be $p(\alpha\mid i)=\prod_j\mathcal N(\alpha_j;m_j(i),e^{v_{ij}})$ with $v_{ij}$ given by Equation~\ref{eq:response}. Under Assumption~\ref{ass:ivae}, the following hold:
\begin{enumerate}[label=(\roman*),leftmargin=*,nosep]
\item \textbf{Natural-parameter rank.} The content natural-parameter differences span $\R^{2n}$ exactly when the fingerprints are pairwise distinct and nonzero.
\item \textbf{Component identification.} If the fingerprints are pairwise distinct, content is component-identified in the sense of Definition~\ref{def:comp}, up to scalar affine maps. This includes a singleton zero fingerprint. The component correspondence does not mix true style into recovered content.
\item \textbf{Partial identification.} In general, the classes of equal fingerprints are identified as blocks: every observationally equivalent model satisfies $\hat\alpha=A\alpha+a$ almost surely, with $A\in\GL(n)$ block diagonal on these classes after an output permutation. The correspondence does not depend on true style. For each class $C$, the matrix $A_C\Sigma_CA_C^\top$ is positive diagonal, where $\Sigma_C=\operatorname{diag}(e^{b_{0j}})_{j\in C}$. In each repeated class, the covariance-preserving subgroup contains non-component-wise transformations preserving the assumptions and observation law.
\end{enumerate}
\end{theorem}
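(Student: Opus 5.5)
I would follow the iVAE template \citep{khemakhem2020ivae}, treating $u=(i,s_i)$ as the auxiliary variable. The Gaussian content law is an exponential family with sufficient statistics $T(\alpha)=(\alpha_1,\dots,\alpha_n,\alpha_1^2,\dots,\alpha_n^2)$ and natural parameters
\[
\eta(i)=\bigl(m_j(i)e^{-v_{ij}},\,-\tfrac12 e^{-v_{ij}}\bigr)_{j}.
\]

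\textbf{Part (i): the rank statement.} I would show that the differences $\eta(i)-\eta(i_0)$ span $\R^{2n}$ exactly when the fingerprints are distinct and nonzero. The quadratic block is $-\tfrac12 e^{-b_{0j}}e^{-b_j^\top s}$, with $s$ ranging over the open set $\mathcal S$. Because $(m,s)$ has product support, the means can be varied at fixed $s$, and $\mathcal M$ affinely spans $\R^n$; this makes the linear block free once the quadratic block is controlled. The spanning question therefore reduces to whether the functions $\{1,e^{-b_1^\top s},\dots,e^{-b_n^\top s}\}$ are linearly independent on an open set. Exponentials with distinct exponents are linearly independent, so independence holds exactly when the $b_j$ are pairwise distinct and nonzero. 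Conversely, a repeated fingerprint, or a zero fingerprint (which is collinear with the constant function), removes a dimension from the span.

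\textbf{Step two: reduce to the content law.} Given an observationally equivalent model $\hat\theta$, I would first use the style-overlap condition and injectivity of the decoders. The composite $\hat f^{-1}\circ f$ is a fixed smooth bijection, and equality of the grouped-view laws for almost every $\tau$, with a common support $\mathcal M\times\mathcal S$, lets me vary content parameters while the style law stays fixed. Since the style conditionals share null sets, this forces the recovered content $\hat\alpha$ to be a function of $\alpha$ alone, so true style is not mixed into recovered content. This step is the delicate measure-theoretic part. I expect it to be the main obstacle, because style is nonparametric and dependent. My first attempt would be a conditional-density argument: write the density of $x$ conditional on $(i,s,\tau)$, take log-ratios across instances at the same $\tau$ so that the style factor cancels, and conclude that the remaining identity depends only on the content coordinates.

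\textbf{Step three: the linear relation.} Taking log-density differences across instances gives
\[
\langle T(\alpha),\eta(i)-\eta(i_0)\rangle=\langle T(\hat\alpha),\hat\eta(i)-\hat\eta(i_0)\rangle+\text{const},
\]
where the Jacobian terms cancel. This yields $T(\alpha)=L\,T(\hat\alpha)+c$ on the support, restricted to the span available. Comparing quadratic and linear terms, as in the iVAE argument that Gaussian statistics force affine maps, gives $\hat\alpha=A\alpha+a$. Substituting the affine form, $\hat\alpha\mid i$ is Gaussian with covariance $A\Sigma(s)A^\top$, where $\Sigma(s)=\operatorname{diag}(e^{b_{0j}+b_j^\top s})$, and this must be diagonal for every $s\in\mathcal S$. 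Since $\mathcal S$ is open, I would expand each off-diagonal entry $\sum_j A_{kj}A_{lj}e^{b_{0j}}e^{b_j^\top s}=0$ and group the terms by distinct fingerprint value. Independence of exponentials then forces $\sum_{j\in C}A_{kj}A_{lj}e^{b_{0j}}=0$ for each class $C$ and each $k\neq l$. In other words, the columns of $A$, grouped by class and weighted by $\Sigma_C$, have pairwise orthogonal row restrictions. Together with invertibility of $A$, this implies that each row of $A$ is supported on a single class, which gives the block-diagonal structure after permutation and shows that $A_C\Sigma_CA_C^\top$ is positive diagonal.

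\textbf{Conclusions.} Singleton classes, including a single zero fingerprint, give scalar affine maps; this is part (ii). For each repeated class, Lemma~\ref{lem:homo} applied within the block, with $M_Q$ for $Q\in O(|C|)$, supplies the non-component-wise equivalent models claimed in part (iii). One further check is needed: the span argument of part (i) is not strictly necessary for part (iii). There I would rely directly on the mean variation, which pins down the linear statistics, together with variation in $s$, rather than on the full rank $2n$.
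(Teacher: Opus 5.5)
The gap is in your Step three. The inversion $T(\alpha)=L\,T(\hat\alpha)+c$ is the iVAE device, and it needs the true natural-parameter differences to span $\R^{2n}$. By your own part (i), this fails whenever a fingerprint is zero or repeated. Those are exactly the cases that part (ii) claims (``a singleton zero fingerprint'') and that part (iii) covers: its affine claim must hold even for $B=0$, the setting of Lemma~\ref{lem:homo}. Your fallback of using mean variation together with variation in $s$ does not repair this.

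The competing covariances at a fixed true strength need not coincide across instances. So mean variation gives only $L\alpha=K\hat\alpha+R\,\hat\alpha^{\odot2}+c$: each $\alpha_j$ is an affine function plus a diagonal quadratic form in $\hat\alpha$. Strength variation adds, for each class $C$ with nonzero fingerprint, the statistic $\sum_{j\in C}e^{-b_{0j}}\alpha_j^2$. Substituting and matching degree-four terms, which form a positive combination of squares of the quadratic parts, does kill those parts, so your algebra can be completed for nonzero classes. For a zero-fingerprint coordinate, however, no squared statistic ever enters a log-ratio. Take $b_1=0\ne b_2$: coefficient comparison cannot exclude the bijection $\alpha=(\hat\alpha_1+\hat\alpha_2^2,\hat\alpha_2)$, which has the form of every relation you have. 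It is excluded only because $\hat\alpha_1=\alpha_1-\alpha_2^2$ would not be Gaussian.

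That distributional fact is the missing idea, and the paper uses it for all cases at once.
\begin{itemize}
\item Fix a common strength $s_0$ and affinely independent means $m_0,\dots,m_n$, reached as total-variation limits of instances since none need sit exactly there.
\item The true log-likelihood ratio is then $(m_k-m_0)^\top D_0^{-1}\alpha+c_k$, affine in $\alpha$ and hence Gaussian under condition $0$.
\item The competing ratio is a quadratic polynomial of the nondegenerate Gaussian $\hat\alpha$. Such a polynomial is Gaussian only if its quadratic part vanishes; otherwise its moment-generating function diverges.
\item This forces $V_k=V_0$ and $L\alpha=K\hat\alpha+c$ with $L,K$ invertible, with no rank condition. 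The variance responses are used only afterwards, for the block structure.
\end{itemize}
The same identity shows directly that $\hat\alpha$ depends on $\alpha$ alone. So the style step you flag as the main obstacle is a by-product: style factors simply cancel in likelihood ratios at a fixed auxiliary value. Working with Radon--Nikodym derivatives carried by the bimeasurable bijection $\hat f^{-1}\circ f$ also avoids two unsupported assumptions of yours: that this map is smooth, and that $x$ has a density whose Jacobian terms cancel.

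Your part (i) is sound. So is your block-diagonal step: independence of exponentials makes $A_{:,C}\Sigma_CA_{:,C}^\top$ diagonal for each class, and counting pairwise-orthogonal rows against invertibility puts each row in one class. The residual ambiguity of part (iii) is also sound.
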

\emph{Proof sketch:} Part (i) follows from mean variation and independence of exponential responses. For parts (ii)--(iii), product support supplies Gaussian laws at a common strength with affinely independent means. Style cancels from their likelihood ratios, so a true log-ratio, affine in Gaussian content, equals a competing log-ratio that is quadratic; a quadratic polynomial of a nondegenerate Gaussian is Gaussian only if its quadratic part vanishes, so the correspondence is affine. Diagonal covariances then restrict mixing to equal-fingerprint classes, and whitened rotations within repeated classes give the remaining ambiguity. Appendix~\ref{app:fingerprint} gives the full proof.

\runin{Implications for augmentation design.}
A single strength can distinguish many variance responses, so $B$ need not have full row rank. A unique zero fingerprint also remains identifiable. Thus nonlinear ICA's full natural-parameter rank condition is not necessary here.

%% file: sections/04_learning.tex
\section{Learning with recorded strengths}
\label{sec:objective}

Our learner follows the memory-bank instance-discrimination framework of \citet{wu2018instance}, using Gaussian scores with strength-dependent variances and an additional within-instance variance-matching loss. The affine response in Equation~\ref{eq:response} shares variance parameters across instances, allowing the learner to estimate their responses from few views. Content means are estimated separately.

\runin{Inputs and response model.}
The learner receives instance-grouped augmented views and one recorded strength vector $s_i\in\R^d$ per instance, fixed throughout training. The response model in Equation~\ref{eq:response} has learned intercepts and a matrix $B$ that may be prescribed or learned. Prescribing $B$ supplies the response coefficients and their assignment to content coordinates. Training uses the recorded strengths, not the realized per-view augmentation shifts or semantic factor values.

\runin{Content codes and instance means.}
A memory bank stores estimated instance content means, held fixed during each gradient update, as reference centers for instance ranking. We refresh it from newly sampled views each epoch and update it by an exponential moving average after each minibatch. At each refresh, we divide codes and stored means by the coordinate-wise standard deviations of the raw instance means. These scales remain fixed between refreshes; gradients pass through neither them nor the bank. We initialize the log-variance intercepts from the untrained encoder's within-instance dispersion in these calibrated units. Appendix~\ref{app:gauge} gives the details.

\runin{Training objective.}
Squared-distance alignment penalizes within-instance dispersion (Equation~\ref{eq:identity}, Appendix~\ref{app:alignment}). To distinguish instances while retaining the designed fluctuations, we combine instance discrimination with variance matching:
\begin{equation*}
 \mathcal L=\mathcal L_{\mathrm{sep}}+\gamma\mathcal L_{\mathrm{var}},
 \qquad \mathcal L_{\mathrm{var}}=\mathcal L_{\mathrm{logvar}}+\eta\mathcal L_{\mathrm{corr}}.
\end{equation*}
Here $\mathcal L_{\mathrm{sep}}$ ranks the source instance of each view, $\mathcal L_{\mathrm{logvar}}$ matches its variance response, and $\mathcal L_{\mathrm{corr}}$ targets zero within-instance cross-covariances. The weights $\gamma$ and $\eta$ control the variance terms and the relative contribution of decorrelation.

\runin{Gaussian instance ranking.}
For calibrated content $y$, stored candidate mean $m_u$, and recorded candidate strengths $s_u$, the ranking score is
\begin{equation}
 \ell_u(y)=-\frac12\sum_j\left[(y_j-m_{uj})^2e^{-v_{uj}}+v_{uj}\right],
 \qquad v_{uj}=b_{0j}+b_j^\top s_u.
 \label{eq:score}
\end{equation}
The quadratic term measures deviations relative to the candidate's variance; the log-determinant term $\sum_j v_{uj}$ penalizes broad variances. We apply a softmax to these scores over the source instance and sampled bank negatives. Cross-entropy with the source instance as the target defines $\mathcal L_{\mathrm{sep}}$ (Algorithm~\ref{alg:step} in Appendix~\ref{app:impl}).

\runin{Variance matching.}
We split each instance's calibrated views into two equal groups, each containing at least two views. The term $\mathcal L_{\mathrm{logvar}}$ multiplies their bias-corrected log-variance residuals against $v_{ij}$. The term $\mathcal L_{\mathrm{corr}}$ multiplies corresponding off-diagonal covariances and normalizes by pooled sample variances. The split addresses the sampling bias from squaring a single estimate; the implemented loss is not generally unbiased (Appendix~\ref{app:logspace}). These terms leave instance offsets unconstrained, motivating the accompanying instance-discrimination loss (Lemma~\ref{lem:offsets}, Appendix~\ref{app:offsets}).

\runin{Connection to the identification theorem.}
The Gaussian score in Equation~\ref{eq:score} substitutes encoded content $y$ for latent content $\alpha$ in Equation~\ref{eq:shared}, with bank estimates $m_u$ for means $m(u)$. The term $\mathcal L_{\mathrm{logvar}}$ targets Equation~\ref{eq:response}, while $\mathcal L_{\mathrm{corr}}$ encourages diagonal conditional covariance.

%% file: sections/05_experiments.tex
\section{Experiments testing the design}
\label{sec:experiments}

Theorem~\ref{thm:fingerprint} describes which ambiguities variance responses resolve at the population level. We now ask whether these responses support recovery using the proposed learner and finite data. Linear and nonlinear synthetic experiments test how the response design and the number of views affect recovery, and whether the response must be supplied. Image experiments test coordinate recovery under distinct responses (Theorem~\ref{thm:fingerprint}(ii)) and factor grouping under repeated responses (Theorem~\ref{thm:fingerprint}(iii)).

\subsection{Synthetic experiments}
\label{sec:synthetic}
We generate four-dimensional Gaussian content for 2,000 instances. Each instance receives an independent mean $m(i)\sim\mathcal N(0,I_4)$ and strength vector $s_i\sim\mathcal N(0,\omega^2I_d)$, both fixed across views. Each view samples
\[
\alpha_{ir}\mid i\sim\mathcal N\!\left(m(i),\operatorname{diag}\{\exp(Bs_i)\}\right).
\]
The rows of $B$ specify the response fingerprints, and $\omega$ controls strength variation across instances. At $\omega=0$, every instance has identity content covariance, violating Assumption~\ref{ass:ivae}(ii). The linear studies observe $x_{ir}=A\alpha_{ir}$ for an unknown invertible matrix $A$. The nonlinear studies add style and replace $A$ with a neural decoder. Appendix~\ref{app:synthetic} gives the construction and training details.

\runin{Distinct responses permit recovery.}
We first test the response design with a linear likelihood estimator (Figure~\ref{fig:synthetic}A). We pair a shared scalar strength with either equal or distinct slopes. Independently varying strengths act through either $B=I_4$ (one strength per component) or a fixed random $B$ (strengths affecting multiple components). Using four views, we estimate an unmixing map and separate variances for each instance and coordinate. The estimator receives only grouped views; $B$ governs data generation but is neither supplied nor fitted. Amari distance measures recovery, with zero indicating recovery up to scaling and permutation. Equal slopes retain the mixing ambiguity. Independent strengths improve recovery as their variation increases. Distinct scalar slopes supply full population natural-parameter rank but give substantially worse recovery over this sweep.

\runin{Pooling improves estimation from few views.}
We next ask whether supplying the strengths makes recovery easier when each instance provides few views. Separate variance estimation fits $Mn$ parameters for $M$ instances, while the affine response in Equation~\ref{eq:response} shares $n(d+1)$ parameters across instances. Figure~\ref{fig:synthetic}B compares these estimators as the view count varies, with true $B=I_4$ and $\omega=0.5$. Both estimate a linear unmixing map; the pooled estimator also learns the response to the recorded strengths. Pooling gives low recovery error with two views, where separate estimates perform poorly. At four views, Amari error falls from \amortPlugHalf{} to \amortDHalf{}. Appendix~\ref{app:amort} defines the estimators and tests a control in which variances vary independently of the recorded strengths. Pooling fails in that control, showing that its benefit depends on how well the strengths explain the variance variation.

\runin{Recovery with a nonlinear decoder.}
We next test whether the Gaussian-score learner can recover content through nonlinear mixing with style. We generate content with $B=I_4$, add two independent Gaussian style coordinates, and observe $x_{ir}=f(\alpha_{ir},\beta_{ir})$ through a fixed invertible three-layer network. We measure coordinate alignment by Hungarian-matched mean absolute correlation (MCC) between encoded views and their latent realizations. When the learner receives the true $B$ with eight views and three seeds, mean view MCC rises from 0.729 at $\omega=0$ to 0.999 at $\omega=0.5$ with the variance term (Table~\ref{tab:propc}), and removing the term ($\gamma=0$) also gives near-perfect alignment at $\omega=0.5$: the score itself reads the supplied responses.

We then withhold $B$, as in the dSprites study below: the learner fits a dense $4\times4$ matrix from zero, with no support pattern, so it is not told which strength drives which coordinate. With four views, no variance term and five seeds, mean view MCC is \synthDenseMccOff{} at $\omega=0$ and \synthDenseMccHalf{} at $\omega=0.5$, and no seed falls below \synthDenseMccOnLo{} at any $\omega>0$. Three seeds given the true $B$ in this same configuration reach \synthOracleMccHalf{} at $\omega=0.5$, so withholding $B$ does not change alignment; these runs use fewer views and stop early, so that reference, not Table~\ref{tab:propc}, is their comparison. The fitted $B$ recovers the assignment: a one-to-one matching collects at least \synthDenseShareLo{} of its absolute mass at every $\omega>0$, the matched slopes lie between \synthDenseSlopeLo{} and \synthDenseSlopeHi{} against the true value 1, and the inferred assignment agrees with the coordinates' latent alignment in all \synthDenseAgree{} of \synthDenseAgreeN{} seeds (Table~\ref{tab:synthdense}).

\begin{figure}[t]
\centering
\includegraphics[width=.63\linewidth]{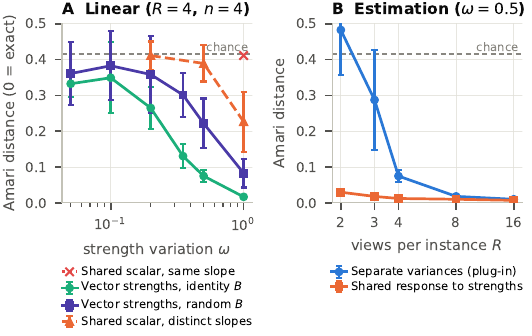}
\caption{\textbf{Linear synthetic recovery.} Amari distance (zero is exact recovery) against (A) strength variation for the four response constructions and (B) views per instance for separate and pooled variance estimation. Dashed lines: mean distance of a Haar-random rotation. Bars: one standard deviation across seeds.}
\label{fig:synthetic}
\end{figure}

\subsection{Image experiments}
\label{sec:images}
\runin{Data and interventions.}
dSprites contains binary images of 2D shapes with known generative factors \citep{matthey2017dsprites}. Translation and log-zoom target horizontal position, vertical position, and scale separately, giving three scalar factors for testing coordinate recovery. Shape and orientation vary across instances but are not targeted by these augmentations. Shapes3D contains rendered scenes with controlled object properties and hues for object, floor, and wall \citep{burgess2018shapes3d}. Region-wise hue rotations target these three hues. Because hue is periodic, we use two coordinates per hue and test whether each pair captures a hue group.

We draw $s_i\sim\mathcal N(0,\omega^2I_d)$ once per instance and hold it fixed across views. Each view then samples a shift $a_{irk}=\sigma_{0k}e^{s_{ik}}\epsilon_{irk}$ for each family $k$, with independent standard Gaussian draws $\epsilon_{irk}$. Here $\sigma_{0k}>0$ is the shift standard deviation at zero strength. Setting $\omega=0$ removes variation in strength across instances but retains random shifts within each instance. An ideal additive shift gives a slope-2 log-variance response; zoom is additive in log-scale. Appendix~\ref{app:amounts} derives this response and explains the limits of the Gaussian content model for image augmentations.

\runin{Training and evaluation.}
All image studies train a four-layer convolutional encoder on 20,000 instances, using eight views per instance and 400 epochs. For evaluation, we average encoder outputs over eight views sampled once and held fixed. These averages come directly from the images, so new instances need no bank entry.

On dSprites, MCC measures alignment with individual factors, while five-neighbor factor $R^2$ measures prediction from the whole representation. We evaluate both on 2,000 training-pool instances, splitting the pool in half to fit and test the predictor. To assess transfer beyond the training pool, we also evaluate \holdoutN{} instances excluded from encoder training, with strengths drawn independently from the same distribution. We compare alignment against random rotations and invertible mixtures of the learned representation. Appendices~\ref{app:impl} and~\ref{app:images} give the protocols and reference constructions.

\begin{figure}[t]
\centering
\includegraphics[width=\linewidth]{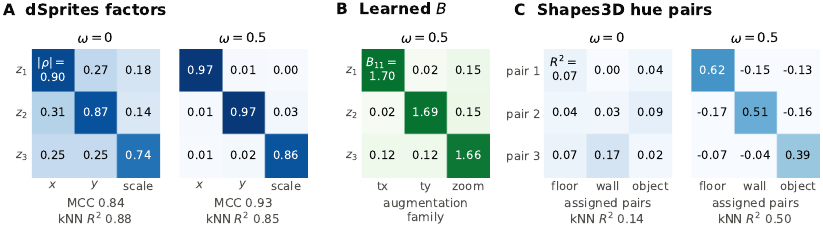}
\caption{\textbf{Image results.} (A) dSprites with a learned dense $B$: absolute correlation $|\rho|$ between learned coordinates and targeted factors, with coordinates matched to factors in each run and entries averaged over five seeds. (B) Fitted $B$ at $\omega=0.5$; rows follow (A), and columns are the families that shift $x$, $y$ and scale (ideal slope 2). (C) Shapes3D: kNN $R^2$ for predicting each hue from each prescribed coordinate pair, averaged over six seeds; negative entries are shaded as zero.}
\label{fig:images}
\end{figure}

\runin{Strength variation selects the coordinates.}
On dSprites, we test whether the learner can recover the three target factors while learning their variance responses. We fit all entries of a $3\times3$ matrix $B$ from zero, without specifying which augmentation family affects which coordinate. Table~\ref{tab:learnedresponse} reports recovery as strength variation increases. Figure~\ref{fig:images}A compares coordinate alignment with and without strength variation. Without strength variation, the coordinates mix the factors, with off-diagonal correlations up to 0.31, and mean factor MCC is 0.837. At $\omega=0.5$, the off-diagonal correlations fall to at most 0.03 and MCC rises to \trainDspMccHalf{}. Block prediction changes little (kNN $R^2$ 0.88 and 0.85), so the representation retains the factor information in both conditions; strength variation changes which coordinates carry it. The result transfers to instances excluded from encoder training (Table~\ref{tab:learnedresponse}), and all \holdoutDspIdentN{} runs at $\omega=0.2$ and $0.5$ exceed their mixing references on both instance sets.

The benefit does not extend to the largest tested strength variation. At $\omega=1$, both alignment and block prediction deteriorate sharply (Table~\ref{tab:learnedresponse}). Increasing $\omega$ also increases the expected squared shift, $\sigma_{0k}^2e^{2\omega^2}$, to about $7.4$ times its value at $\omega=0$. This comparison therefore cannot separate the effect of greater strength variation from that of larger shifts.

\runin{The learner finds the response assignment.}
Figure~\ref{fig:images}B shows the fitted $B$ at $\omega=0.5$. Each coordinate responds mainly to the family that shifts the factor it aligns with. A one-to-one matching accounts for a fraction \denseHalfShare{} of the matrix's absolute mass, and the inferred assignment agrees with factor alignment in all \denseHalfAgree{} seeds (Appendix~\ref{app:learned-response}). The matched slopes, near 1.7, stay below the ideal value of 2. Prescribing $B=2I_3$ gives a similar MCC of \trainGaussDspMcc{}, so a supplied assignment is not needed for alignment here. In that prescribed setting, permuting strengths across instances or removing the variance term lowers both alignment and block prediction (Appendix~\ref{app:dsprites-primary}, Figure~\ref{fig:dsprites-controls}).

\runin{Repeated responses support hue groups.}
Finally, we test whether repeated responses encourage grouping on Shapes3D. We prescribe two coordinates per hue, with the same fingerprint within each pair. Under the Gaussian model, this pattern identifies three two-dimensional blocks (Theorem~\ref{thm:fingerprint}(iii)). We assess the corresponding grouping behavior by predicting each hue's circular coordinates $(\cos2\pi h,\sin2\pi h)$ from each learned pair, using five-neighbor regression. Hungarian assignment matches pairs to hues. A mixing null tests whether arbitrary subspaces of the same representation predict the hues as well (Appendix~\ref{app:hue-criteria}).

With strength variation, all \sThreeSigOn{} of \sThreeSeedsN{} seeds exceed their null at $\hat p<0.01$. The mean assigned score is \sThreeAssignedMean{}, compared with \sThreeOffMean{} for off-assignment predictions. Without strength variation, \sThreeSigOff{} of \sThreeSeedsN{} seeds exceed their hue-group mixing reference, and the assigned score is \sdOmZeroAssigned{}. Averaged over seeds, each pair predicts its own hue best under strength variation (Figure~\ref{fig:images}C). Table~\ref{tab:s3dscores} in Appendix~\ref{app:shapes3d} reports the results for each seed. These results support selective, imperfect hue grouping under the designed responses. They do not test the theorem's full Gaussian setting, because a smooth two-coordinate representation of continuous hue lies on a curve. The aggregate Shapes3D scores in Table~\ref{tab:holdout} additionally include scale and orientation and do not measure this grouping.

\begin{table}[t]
\centering\small
\caption{\textbf{Learned dense responses.} Mean $\pm$ standard deviation over five optimization seeds with a shared strength design.}
\label{tab:learnedresponse}
\begin{tabular}{@{}crrrr@{}}\toprule
 & \multicolumn{2}{c}{Factor MCC} & \multicolumn{2}{c}{kNN $R^2$} \\
$\omega$ & training pool & held-out & training pool & held-out \\\midrule
0 & 0.837 $\pm$ 0.065 & 0.838 $\pm$ 0.064 & 0.881 $\pm$ 0.002 & 0.877 $\pm$ 0.008 \\
0.2 & 0.928 $\pm$ 0.023 & 0.927 $\pm$ 0.023 & 0.877 $\pm$ 0.001 & 0.872 $\pm$ 0.007 \\
0.5 & 0.932 $\pm$ 0.001 & 0.931 $\pm$ 0.001 & 0.848 $\pm$ 0.003 & 0.847 $\pm$ 0.007 \\
1 & 0.579 $\pm$ 0.247 & 0.581 $\pm$ 0.242 & 0.394 $\pm$ 0.275 & 0.403 $\pm$ 0.258 \\
\bottomrule\end{tabular}
\end{table}

%% file: sections/06_discussion.tex
\section{Related work}
\label{sec:related-work}

\runin{Sharing patterns and structure.}
Identification results separate factors only as finely as the information distinguishing them. Shared-content SSL identifies preserved content up to an invertible block transformation \citep{vonkugelgen2021,daunhawer2023}. Since a factor shared by all views stays mixed with the others, augmentation-based and multi-domain results identify individual style variables while content remains a block \citep{eastwood2023,kong2022partial,xie2023multidomain}. Structural criteria based on the variables each factor is tied to \citep{lachapelle2026sparsity,zheng2022structural} leave factors with identical relationships mixed, as in our baseline constructions.

\runin{Designed conditional variability.}
Auxiliary-variable nonlinear ICA separates factors through changes in their conditional distributions \citep{hyvarinen2019auxiliary,khemakhem2020ivae}, variation it takes from the data source. The instance index is such a variable, but constant conditional covariance still permits mixing (Lemma~\ref{lem:homo}), and SSL alignment pushes within-instance dispersion toward zero (Appendix~\ref{app:alignment}). We instead design the variation, keeping instance means stable while recorded strengths vary conditional variances. Nonstationary linear ICA fails for proportional variance profiles \citep{pham2001blind}, the linear analogue of repeated fingerprints.

\runin{Recorded augmentation information.}
Equivariant and augmentation-aware methods use per-view transformation parameters as prediction targets or inputs \citep{garrido2023sie,devillers2023equimod,lee2021augself}, without an identification guarantee. Our strengths are fixed per instance and govern its augmentation distribution, and the learner uses them to model conditional variances without observing realized shifts. Appendix~\ref{app:related-work} also contrasts our assumptions with contrastive identification results \citep{zimmermann2021,rusak2025infonce}.

\section{Discussion}
\label{sec:conclusion}
Shared content alone can leave individual factors indistinguishable from mixtures. Our results show how augmentation design can supply the information needed to distinguish them. In the Gaussian comparison class, varying recorded strengths across instances rules out mixing between factors with distinct variance responses, while repeated responses retain ambiguity within blocks (Theorem~\ref{thm:fingerprint}). The learner uses those same strengths to pool variance estimation across instances, making the designed variation useful even with few views. Synthetic experiments demonstrate recovery through nonlinear mixing. On dSprites, moderate strength variation aligns individual coordinates more closely with the factors, while prediction from the full representation changes little. The learner discovers the coordinate assignment. On Shapes3D, coordinate pairs with prescribed, repeated fingerprints each predict their assigned hue best on average. Together, the theory and controlled experiments motivate treating component identification in SSL as an experimental design problem: choosing augmentation distributions and recording their strengths can supply information that shared content alone leaves unresolved.

\runin{Limitations and future work.}
The guarantee concerns a Gaussian comparison class, not recovery by the trained learner; the image experiments do not establish its assumptions (Appendix~\ref{app:limits}). Their augmentations target known factors, even when the coordinate assignment is learned. Comparisons are limited to ablations; we do not test for augmentation shortcuts, quantify the cost of eight views, or use natural images. Future work could study how to choose strengths for reliable recovery and what identifying information richer SSL augmentation pipelines supply. Larger, more complex datasets would help assess applicability beyond transformations designed to target individual factors.

%% file: sections/07_ai_use.tex
\subsection*{AI use statement}
We used large language model (LLM) assistance in this work. We disclose it below by the categories of the ICLR AI policy, followed by how the assisted work was checked.

\paragraph{Tasks with required disclosure.} LLM assistance contributed to the following.
\begin{itemize}[leftmargin=*,nosep]
\item \emph{Formulating mathematical claims:} the observationally equivalent construction of Section~\ref{sec:obstruction}.
\item \emph{Proofs:} \theauthors{} chose the assumptions and the claims of Lemma~\ref{lem:homo} and Theorem~\ref{thm:fingerprint}. The proof techniques were heavily informed by LLM assistance, which also drafted the arguments of Appendix~\ref{app:proofs} and the objective analysis of Appendix~\ref{app:estimation}; \theauthors{} then checked every proof by hand.
\item \emph{Proposing hypotheses and designing methodology and experiments:} \theauthors{} identified that conditioning on the instance alone leaves content unidentified. The instance-specific augmentation strength scheme emerged from brainstorming with an LLM about augmentation-based remedies, and we do not attempt to separate the LLM's contribution to that idea from our own. The affine log-variance response family of Equation~\ref{eq:response} is \theauthorsp{} own; the learner's regularization of variances in log space was an LLM suggestion, made after early experiments that regularized covariances directly proved scale-sensitive. The controls and ablations were chosen by \theauthors.
\item \emph{Implementing methods:} the training, evaluation and manuscript workflow, including the code for the coordinate-unit symmetry of the ranking score in Appendix~\ref{app:gauge}.
\end{itemize}
We did not use generative AI to generate synthetic data sets, to develop theoretical models or conceptual frameworks, to clean or reformat data sets, to support qualitative or thematic analysis, to interpret results, or for translation.

\paragraph{Tasks with recommended disclosure.} We also used LLM assistance for brainstorming, identifying gaps in the literature, literature search, drafting and editing prose, generating figures and tables, and suggestions on the title and structure. In addition, we used LLM-generated adversarial reviews to search for weaknesses. Their findings were treated as claims to verify rather than as conclusions: each was checked against the code, the stored run records or the proofs before any statement in the paper changed, and several were rejected on inspection.

\paragraph{Review and responsibility.} \Theauthors{} checked every theorem and proof by hand; no proof assistant or other formal verification was used. LLM-written code is covered by the test suite and was sanity-checked manually by \theauthors. Every reported table, figure and numerical macro is rendered from stored run records by the declared workflow, and the interpretation of the results is \theauthorsp{} own. \Theauthors{} takes{} responsibility for the final content of this work, including the claims, the correctness of the proofs, the experimental design and all released artifacts.

%% file: sections/08_reproducibility.tex
\subsection*{Reproducibility statement}
All experiments run as one Snakemake workflow that trains every run, evaluates it, and renders every table, figure and numerical macro of this manuscript from the resulting run records. Each run is a function of the committed configuration file and its own seeds and writes one record carrying that configuration, the seeds, the code commit and the package versions; trained encoders save their final weights beside the record. Assumption~\ref{ass:ivae} states the conditions of the identification result, Section~\ref{sec:obstruction} those of the observationally equivalent construction, and Appendix~\ref{app:proofs} gives the proofs. Appendix~\ref{app:synthetic} specifies the synthetic constructions and their training, and Appendix~\ref{app:impl} the image data, encoder, training budget and evaluation, including the evaluation on held-out instances reported in Tables~\ref{tab:learnedresponse} and~\ref{tab:holdout}. The synthetic data are generated by the workflow; the image benchmarks are the public dSprites and Shapes3D releases, subsampled as stated in Appendix~\ref{app:impl}.
The code, the workflow with its configuration and environments, the run records behind the reported tables and figures, and a script that regenerates them from those records will be released publicly.

%% file: appendices/00_related_work.tex
\section{Extended related work}
\label{app:related-work}

\runin{What distinguishes one factor from another.}
Independence alone does not identify latent factors through a nonlinear decoder \citep{hyvarinen1999nonlinear}, and \citet{locatello2019challenging} extend this impossibility to unsupervised disentanglement. Positive results add information that treats factors differently, and each separates factors only as finely as that information does. The literature draws this information from the factors that views share, the variables each factor is structurally tied to, and the way factor distributions change with an auxiliary variable. In the baseline SSL setting, none of these need distinguish one content factor from another (Section~\ref{sec:obstruction}). Our design adds a variance response to recorded strengths, and Theorem~\ref{thm:fingerprint} follows the same rule, since distinct fingerprints identify components and repeated fingerprints leave blocks.

\runin{Sharing patterns.}
Shared-content SSL identifies the content preserved across views up to an invertible transformation of the whole block \citep{vonkugelgen2021,daunhawer2023}. Weakly supervised pairs and partially overlapping views separate factors as finely as their sharing patterns distinguish them \citep{locatello2020weakly,ahuja2022sparse,yao2024}. A factor shared by every view stays grouped with every other such factor, so content remains a block in augmentation and multi-domain results, even where individual style variables are identified \citep{eastwood2023,timilsina2026csdi}. Our views share every target factor's instance mean, and Lemma~\ref{lem:homo} shows that mixtures then remain admissible even under conditional independence. Whole-vector regression scores, a common content metric, cannot detect these mixtures (Appendix~\ref{app:metrics}).

\runin{Structural relationships.}
Mechanism sparsity, decoder sparsity and interventions separate factors by the actions, observed variables or intervention targets tied to each \citep{lachapelle2022mechanism,lachapelle2026sparsity,zheng2022structural,zheng2023sparsity,ahuja2023interventional,vonkugelgen2023nonparametric}. They leave factors with identical relationships mixed \citep{lachapelle2026sparsity}. In our baseline constructions, content factors share the instance as their only parent and can affect the same observed variables, and Gaussian mixtures preserving these relationships remain admissible (Appendix~\ref{app:models}). A single recorded strength may act on several factors, which distinct response slopes rather than distinct targets then separate.

\runin{Distributional variation.}
Auxiliary-variable nonlinear ICA distinguishes factors by how their conditional distributions change with a time segment, a domain or a latent state \citep{hyvarinen2016tcl,hyvarinen2019auxiliary,halva2020hmm,khemakhem2020ivae}. Constant Gaussian covariances violate its variability condition and permit rotations \citep[Proposition~1 and Supplement~B.5]{khemakhem2020ivae}, which Lemma~\ref{lem:homo} extends from equal to unequal variances. Multi-domain content-style models inherit this division, identifying changing factors individually and invariant content as a block \citep{kong2022partial,xie2023multidomain}. In this line the identifying variation is a property of the data source. We condition instead on the instance and create variance variation through augmentation design, keeping instance means stable. The closest precedent is linear separation of nonstationary Gaussian sources, whose separation condition excludes proportional variance profiles \citep{pham2001blind}. Repeated fingerprints are their nonlinear counterpart.

\runin{Identification by contrastive objectives.}
With a uniform latent marginal and a von Mises--Fisher positive-pair conditional, InfoNCE recovers latents up to an orthogonal transformation \citep{zimmermann2021}, and anisotropic conditionals separate groups of factors with unequal concentration \citep{rusak2025infonce}. In these analyses each factor's conditional spread is the same for every anchor, and the restricted marginal is what excludes non-orthogonal mixtures. Our Gaussian comparison class leaves instance means unrestricted, so constant spreads admit the non-orthogonal mixtures of Lemma~\ref{lem:homo}, which variance ratios that change across instances then rule out.

\runin{Recorded strengths in learning.}
Equivariant and augmentation-aware SSL methods predict augmentation parameters from representations, or representation changes from those parameters \citep{dangovski2022essl,lee2021augself,garrido2023sie,devillers2023equimod}. Their parameters describe individual views and are drawn afresh for each. Our strengths are fixed per instance and govern its augmentation distribution, and the learner uses them to model conditional variances without observing the realized shift in any view. They also make few views workable. A per-instance variance estimated from two views is an incidental parameter too noisy to reveal its response (\citealp{neyman1948}; Appendix~\ref{app:estimation}), and response coefficients shared across instances pool this information (Section~\ref{sec:objective}). Our image augmentations still target known factors, so the difference lies in what the learner receives, not in what designing the augmentations requires.

%% file: appendices/01_proofs.tex
\section{Proofs}
\label{app:proofs}

\subsection{Proof of Lemma~\ref{lem:homo} and two structural consequences}
\label{app:models}

Throughout, specialize Equation~\ref{eq:shared} to Gaussian content and a smooth injective decoder $f$ mapping $z=(\alpha,\beta)\in\R^n\times\R^{n_\beta}$ to $x$; the style conditionals and the auxiliary law are held fixed.

\begin{proof}[Proof of Lemma~\ref{lem:homo}]
Let $p(\alpha\mid i)=\mathcal N(m(i),\Sigma)$ with $\Sigma$ positive diagonal. For $Q\in O(n)$ set $M_Q=\Sigma^{1/2}Q\Sigma^{-1/2}$, $f_Q=f\circ(M_Q\oplus\mathrm{id}_\beta)^{-1}$ and $p_Q(\alpha\mid i)=\mathcal N(M_Qm(i),\Sigma)$. Since $M_QM_{Q'}=M_{QQ'}$ and $M_Q^{-1}=M_{Q^\top}$, these matrices form a group $H_\Sigma$. The identity $M_Q\Sigma M_Q^\top=\Sigma$ shows that $M_Q\alpha$ has conditional covariance $\Sigma$ under the truth, so $p_Q$ is again a product of nondegenerate scalar Gaussian conditionals, and $f_Q$ is smooth and injective because $f$ is. The action of Definition~\ref{def:admissible} preserves the grouped-view law, so every member of the orbit is an admissible alternative in the class.

It remains to show that these alternatives are not component-wise when $Q$ is not a signed permutation. A linear correspondence $\tilde\alpha=G\alpha$ with $G\in\GL(n)$ is component-wise in the sense of Definition~\ref{def:comp} only if $G$ is a generalized permutation, a permutation matrix times an invertible diagonal matrix. Suppose instead that row $j$ of $G$ has two nonzero entries, and fix any index $l$. Given an instance, the coordinates of $\alpha$ are independent nondegenerate Gaussians, so $X=\sum_{k\ne l}G_{jk}\alpha_k$ is a nondegenerate Gaussian independent of $\alpha_l$. If $(G\alpha)_j=h(\alpha_l)$ held almost surely for a measurable $h$, then $X=h(\alpha_l)-G_{jl}\alpha_l$ would be a function of $\alpha_l$ and independent of it, hence almost surely constant, a contradiction. The constructed alternative has correspondence $\tilde\alpha=M_Q\alpha$. Since $\Sigma^{\pm1/2}$ are diagonal, $M_Q$ is a generalized permutation exactly when $Q$ is, and an orthogonal generalized permutation is a signed permutation. For $n\ge2$ a rotation by $\pi/4$ in one coordinate plane is not a signed permutation, so the orbit contains observationally equivalent models in the class whose coordinates mix those of the truth.
\end{proof}

Lemma~\ref{lem:homo} extends the Gaussian ambiguity of \citet[Proposition~1, proved in Supplement~B.5]{khemakhem2020ivae}. Their proof rotates a rotationally invariant base measure, which forces every component to have the same variance; conjugating the rotation by $\Sigma^{1/2}$ removes that restriction. Their proposition shows that the linear indeterminacy cannot be reduced to a permutation. Definition~\ref{def:comp} also allows invertible scalar maps, so the second paragraph of the proof is what turns the orbit into a failure of component identification. Constant covariance also leaves the natural parameters rank-deficient; Theorem~\ref{thm:fingerprint}(i) states when designed responses restore full rank.

\paragraph{Preserving action parents and decoder support.} In the setting of Lemma~\ref{lem:homo}, let $n\ge2$ and $M_Q=\Sigma^{1/2}Q\Sigma^{-1/2}$.

\emph{Action parents.} Suppose the means $m(i)$ affinely span $\R^n$ and instance identity is one grouped action parent of the content block, with separate style auxiliaries. Then every $M_Q$ preserves these parent relationships. This specializes the identical-parent ambiguity of \citet{lachapelle2026sparsity} while retaining conditional factorization.

\emph{Decoder support.} Call the functional support of a content-Jacobian column the set of output coordinates where its derivative is not identically zero. If all content columns share a common support $S$, then Haar-almost every $Q\in O(n)$ preserves that support in every column. Shared support is the case that the structural-sparsity conditions of \citet{zheng2022structural} and \citet{zheng2023sparsity} exclude, so this is a restricted ambiguity for support-based criteria, not a counterexample to their results.

\begin{proof}
Each row of $M_Q$ is nonzero, so its projection of an affinely spanning mean set is not constant: every transformed content coordinate still depends on the grouped instance action, and the style dependencies are unchanged. For the support claim, write $J$ for the content Jacobian; then $J_Q(\tilde\alpha,\beta)=J(M_Q^{-1}\tilde\alpha,\beta)M_Q^{-1}$, and the transformed style Jacobian is $\partial f/\partial\beta$ evaluated at the reparameterized point $(M_Q^{-1}\tilde\alpha,\beta)$. Rows outside $S$ remain zero. For each row in $S$, choose a point where that row is nonzero; vanishing of a transformed entry at the corresponding point imposes a nontrivial linear constraint on a row of $Q$, a Haar-null event, and a finite union over output rows and content columns is still null. Signed permutations form a finite set, so Haar-almost every $Q$ both preserves the support and, by the proof of Lemma~\ref{lem:homo}, gives a non-component-wise alternative.
\end{proof}

Hence any criterion that scores a model only through its action graph or its decoder support gives these alternatives the truth's score.

\subsection{Proof of Theorem~\ref{thm:fingerprint}}
\label{app:fingerprint}

\begin{proof}[Proof of Theorem~\ref{thm:fingerprint}]
Write $D(s)=\operatorname{diag}(e^{b_{0j}+b_j^\top s})$ for the true conditional content covariance at strength $s$, $w(s)$ for the vector of precisions $w_j(s)=e^{-b_{0j}-b_j^\top s}$, and $\lambda(m,s)=(m\odot w(s),\,-\tfrac12w(s))\in\R^{2n}$ for the content natural parameters, with $\odot$ the component-wise product.

\paragraph{Part (i): natural-parameter rank.} Full rank is the sufficient-variability condition of \citet[Theorem~1(iv)]{khemakhem2020ivae}; parts (ii) and (iii) do not use it. The differences $\lambda(m,s)-\lambda(m',s')$ range over $(m,s),(m',s')\in\mathcal M\times\mathcal S$. For fixed $s$, the differences $((m-m')\odot w(s),0)$ span $\R^n\times\{0\}$, because the means affinely span $\R^n$ and $w(s)$ has no zero entry. Subtracting such vectors from a difference at two strengths leaves $(0,-\tfrac12(w(s)-w(s')))$, so the span is $\R^{2n}$ if and only if the differences $w(s)-w(s')$ over $s,s'\in\mathcal S$ span $\R^n$, that is, if and only if no nonzero $c\in\R^n$ makes $c^\top w(s)$ constant on $\mathcal S$.

Suppose the $b_j$ are pairwise distinct and nonzero. Pick $u\in\R^d$ with $0,b_1^\top u,\dots,b_n^\top u$ pairwise distinct (the excluded $u$ lie in finitely many hyperplanes) and a segment $\{s_0+\rho u\}$ inside the interior of $\mathcal S$. A relation $c_0+\sum_jc_je^{-b_{0j}-b_j^\top s_0}e^{-(b_j^\top u)\rho}=0$ on an interval is a linear relation among exponentials with pairwise distinct exponents; their Wronskian is a nonzero Vandermonde determinant, so every coefficient vanishes, and $c^\top w$ is constant on $\mathcal S$ only for $c=0$. Conversely, $b_j=b_k$ makes $c^\top w\equiv0$ for $c=e^{b_{0j}}e_j-e^{b_{0k}}e_k$, and $b_j=0$ makes $w_j\equiv e^{-b_{0j}}$ constant; either way the span loses a dimension.

\paragraph{Parts (ii) and (iii): the affine block correspondence.} Let $\theta$ be a model in the comparison class with the same observation law as the truth $\theta^\star$, and write $\alpha=\alpha_{\theta^\star}(x)$ and $\hat\alpha=\alpha_\theta(x)$ for the two content coordinates of the same observation. Four steps show that $\hat\alpha=A\alpha+a$ almost surely, with $A\in\GL(n)$ block diagonal on the fingerprint classes after a row permutation and $A_C\Sigma_CA_C^\top$ positive diagonal on each class $C$. Part (ii) and the block statement of part (iii) follow; the residual ambiguity of part (iii) is proved last.

\emph{Step 1: one correspondence for every law.} Let $\Phi=\hat f^{-1}\circ f$ map the true latent vector to the competing one on the same observation. It is defined almost surely because the two models have the same observation law, and injective Borel decoders make it a bimeasurable bijection between sets of full measure \citep[Theorem~15.1]{kechris1995}; smoothness is not needed below. Write $\hat\alpha=\varphi(\alpha,\beta)$ for its content part, without assuming that $\hat\alpha$ depends on $\alpha$ alone.

We fix one style value at which every property we need holds. Write $q_t=P_{\mathrm{sty}}(\cdot\mid\tau=t)$. For $P_\tau$-almost every $t$, three properties hold: the two models have the same conditional observation law given $(i,\tau=t)$ for almost every instance under the conditional instance law at $t$; the conditional support of $(m(i),s_i)$ given $\tau=t$ is $\mathcal M\times\mathcal S$; and $q_t$ has the same null sets as $q_{t'}$ for $P_\tau$-almost every $t'$ (Assumption~\ref{ass:ivae}(iii)). Fix $t_0$ at which all three hold and write $q=q_{t_0}$; without a style auxiliary, $\tau$ is constant and $t_0$ is its value. Every true content conditional is a nondegenerate Gaussian, so the true latent law has the same null sets as $\gamma_n\otimes q$, with $\gamma_n$ the standard Gaussian, and so do the fixed-strength laws $p_k\otimes q$ of Step~2. Hence $\Phi$ is defined almost surely under each of them.

\emph{Step 2: conditional laws at a common strength.} Fix $s_0\in\mathcal S$ and affinely independent $m_0,\dots,m_n\in\mathcal M$, and write $D_0=D(s_0)$ and $p_k=\mathcal N(m_k,D_0)$. No instance need sit exactly at $(m_k,s_0)$, so we approach that point: every neighborhood of a support point has positive probability given $\tau=t_0$, so there are instances $i_r$ with $(m(i_r),s_{i_r})\to(m_k,s_0)$ at which the two models' conditional laws agree. At $t_0$ each model's style law is fixed, $q$ in the true model and $\hat q$ in the competing one, which need not coincide, and Equation~\ref{eq:shared} gives
\[
 P_{i_r}=\mathcal N\bigl(m(i_r),D(s_{i_r})\bigr)\otimes q,\qquad
 \Phi_\#P_{i_r}=\hat p_{i_r}\otimes\hat q,\qquad
 \hat p_{i_r}=\mathcal N(\nu_{i_r},V_{i_r}).
\]
Total variation contracts under pushforward and under marginalization, and a product with a fixed law preserves it. Hence, in total variation,
\[
 P_{i_r}\to P_k:=p_k\otimes q
 \;\Longrightarrow\;\hat p_{i_r}\otimes\hat q\to\Phi_\#P_k
 \;\Longrightarrow\;\hat p_{i_r}\to\hat p_k,
\]
where $\hat p_k$ is the content marginal of $\Phi_\#P_k$, and uniqueness of limits gives $\Phi_\#P_k=\hat p_k\otimes\hat q$. The limit $\hat p_k$ is Gaussian with diagonal covariance, because Gaussian laws are closed under weak limits and every $V_{i_r}$ is diagonal. It is nondegenerate because each $\hat p_{i_r}$ is equivalent to Lebesgue measure and a total-variation limit inherits absolute continuity. So $\hat p_k=\mathcal N(\nu_k,V_k)$ with $V_k$ positive diagonal.

\emph{Step 3: the correspondence is affine.} $\Phi$ is a bimeasurable bijection carrying $p_k\otimes q$ to $\hat p_k\otimes\hat q$ for every $k$, so it carries likelihood ratios too: the ratio of conditions $k$ and $0$ at $(\alpha,\beta)$ equals the competing ratio at $\Phi(\alpha,\beta)$. On each side the style factor appears in numerator and denominator and cancels, so no style density is needed. What remains is
\[
 \log\frac{p_k}{p_0}(\alpha)=\log\frac{\hat p_k}{\hat p_0}(\hat\alpha)\quad\text{almost surely},\qquad \hat\alpha=\varphi(\alpha,\beta).
\]
The left side is $(m_k-m_0)^\top D_0^{-1}\alpha+c_k$, affine in $\alpha\sim\mathcal N(m_0,D_0)$ under condition $0$, hence Gaussian. The right side is
\[
 -\tfrac12\hat\alpha^\top(V_k^{-1}-V_0^{-1})\hat\alpha+(V_k^{-1}\nu_k-V_0^{-1}\nu_0)^\top\hat\alpha+\hat c_k,
\]
a quadratic polynomial of $\hat\alpha\sim\mathcal N(\nu_0,V_0)$. A quadratic polynomial of a nondegenerate Gaussian vector is Gaussian only if its quadratic part vanishes: after whitening, a nonzero eigenvalue $\zeta$ of the quadratic coefficient makes the moment-generating function diverge at real arguments $\kappa$ with $\kappa\zeta>1/2$, whereas a Gaussian has a finite moment-generating function everywhere. Hence $V_k=V_0$ for every $k$, and the $n$ identities stack as $L\alpha=K\hat\alpha+c$ with rows $L_{k,:}=(m_k-m_0)^\top D_0^{-1}$ and $K_{k,:}=(\nu_k-\nu_0)^\top V_0^{-1}$. Affine independence makes $L$ invertible. A nonzero left-null vector $u$ of $K$ would make $u^\top L\alpha$, a nonzero linear functional of a nondegenerate Gaussian, almost surely constant; so $K$ is invertible and $\hat\alpha=A\alpha+a$ with $A=K^{-1}L\in\GL(n)$. The right-hand side involves only $\alpha$, so the competing content does not depend on true style. The identity holds $p_0\otimes q$-almost surely, and by Step~1 this law has the same null sets as the true latent law, so the identity holds almost surely in the population. Without style, $\Phi=\varphi$ is a Borel bijection between conull subsets of $\R^n$ that carries the Gaussians $p_k$ to Gaussians, and this step is the conull-set form of the affine reduction of \citet[Theorem~3.1 and Corollary~3.2]{parthasarathy2011normality}; his Remark~3.3 shows that the reduction needs bijectivity, which decoder injectivity supplies. Cancelling the style factors extends the reduction to models with style.

\emph{Step 4: no mixing across fingerprint classes.} The affine correspondence gives the competing content of instance $i$ the covariance $E(s_i)$, where $E(s)=AD(s)A^\top$. The comparison class makes $E(s_i)$ diagonal for almost every instance, hence for $s$ in a full-measure subset of $\mathcal S$; that subset is dense in $\mathcal S$ and $E$ is continuous, so $E(s)$ is diagonal on all of $\mathcal S$. Let $\Delta(s)=D(s)D_0^{-1}=\operatorname{diag}(e^{b_j^\top(s-s_0)})$ and $\hat\Delta(s)=E(s)E(s_0)^{-1}$, both diagonal. Then $E(s)=\hat\Delta(s)E(s_0)$ reads $A\Delta(s)D_0A^\top=\hat\Delta(s)AD_0A^\top$, and cancelling $D_0A^\top$ gives $A\Delta(s)=\hat\Delta(s)A$. Entrywise, $A_{kj}\ne0$ forces $e^{b_j^\top(s-s_0)}=\hat\Delta_{kk}(s)$ on $\mathcal S$. If row $k$ has nonzero entries in columns $j$ and $\ell$, then $(b_j-b_\ell)^\top(s-s_0)=0$ on a set with nonempty interior, so $b_j=b_\ell$. Every row of $A$ is therefore supported in one fingerprint class. The rows are linearly independent, so at most $|C|$ of them lie in class $C$, and $\sum_C|C|=n$ forces exactly $|C|$: after a row permutation, $A$ is block diagonal on the classes. Within a class $C$ the true covariance block is $e^{b_C^\top s}\Sigma_C$ with $\Sigma_C=\operatorname{diag}(e^{b_{0j}})_{j\in C}$, a scalar multiple of one fixed matrix, so $E$ is diagonal exactly when $A_C\Sigma_CA_C^\top$ is positive diagonal, that is, when $A_C=\hat\Sigma_C^{1/2}Q_C\Sigma_C^{-1/2}$ for a positive diagonal $\hat\Sigma_C$ and $Q_C\in O(|C|)$. Two variance profiles $e^{b_{0j}+b_j^\top s}$ and $e^{b_{0\ell}+b_\ell^\top s}$ are proportional on $\mathcal S$ exactly when $b_j=b_\ell$, so this is the uniqueness condition of exact joint diagonalization \citep[Theorem~2.3]{afsari2008sensitivity}, the nonstationary-variance condition of \citet{matsuoka1995neural} and \citet{pham2001blind}.

\paragraph{Part (ii): component identification.} If the fingerprints are pairwise distinct, every class is a singleton and every block of $A$ is a nonzero scalar, so $\hat\alpha_j=c_j\alpha_{\pi(j)}+a_j$ with $c_j\ne0$ for a permutation $\pi$: content is component-identified in the sense of Definition~\ref{def:comp}, up to scalar affine maps. A zero fingerprint enters Step~4 like any other value, so a singleton zero class is identified as well, and Step~3 shows that the correspondence does not depend on true style.

\paragraph{Part (iii): partial identification.} Steps 1--4 give the block form and the positive-diagonal condition. It remains to show that each repeated class admits non-component-wise alternatives that preserve the assumptions and the observation law. Let $C$ have common fingerprint $b$ and $|C|\ge2$, and for $Q\in O(|C|)$ let $M$ act as $\Sigma_C^{1/2}Q\Sigma_C^{-1/2}$ on the coordinates in $C$ and as the identity elsewhere. Since $M$ preserves $\Sigma_C$ and the block $D_C(s)=e^{b^\top s}\Sigma_C$ is a scalar multiple of $\Sigma_C$, $MD(s)M^\top=D(s)$ at every strength. Transforming the truth by $M$ as in Lemma~\ref{lem:homo} therefore gives a model with means $Mm(i)$, the same conditional covariances and the same grouped-view law. This model satisfies Equation~\ref{eq:response} with the same $B$ and $b_0$; its means have support $M\mathcal M$, which still affinely spans $\R^n$, and the conditional support given each auxiliary value becomes $M\mathcal M\times\mathcal S$, so Assumption~\ref{ass:ivae}(ii) and the support clause of Assumption~\ref{ass:ivae}(iii) hold with $M\mathcal M$ in place of $\mathcal M$. The decoder $f\circ(M\oplus\mathrm{id}_\beta)^{-1}$ is smooth and injective and the style laws are untouched, so Assumption~\ref{ass:ivae}(i) and the overlap clause of Assumption~\ref{ass:ivae}(iii) also hold. By the proof of Lemma~\ref{lem:homo}, $M$ is not component-wise when $Q$ is not a signed permutation.
\end{proof}

%% file: appendices/02_objective.tex
\section{Objective analysis}
\label{app:estimation}

\subsection{Alignment pressure and few-view variances}
\label{app:alignment}
Squared-distance view alignment penalizes within-instance dispersion. If $\hat\Sigma_i$ is the unbiased covariance of $R\ge2$ encoded views, then
\begin{equation}
\sum_{r<r'}\|\hat\alpha_{ir}-\hat\alpha_{ir'}\|^2=R(R-1)\,\tr\hat\Sigma_i.
\label{eq:identity}
\end{equation}
The alignment terms of contrastive and non-contrastive objectives \citep{wang2020understanding,bardes2022vicreg} therefore push each instance's view dispersion toward zero. The identity describes this pressure; it does not locate the optimum of a full objective with uniformity or variance terms. The Gaussian score of Equation~\ref{eq:score} instead scores each view separately and weights its deviation from the candidate mean by the candidate's strength-dependent precision.

Separate variance estimation is hard because each instance supplies few observations of its variances. Without recorded strengths, the $2Mn$ conditional means and variances are incidental parameters \citep{neyman1948} estimated from $MRn$ scalar observations, and each per-instance variance has $R-1$ degrees of freedom. For Gaussian views, the log sample variance from $R$ views has sampling variance $\psi'((R-1)/2)$, with $\psi'$ the trigamma function \citep{bartlett1946}. At the contrastive default $R=2$ this is $\pi^2/2\approx\noiseFloorTwo$: the per-instance estimate has a standard deviation of about 2.2 in log units, more than four times the spread of the true log-variances at $\omega=0.5$ in the synthetic designs. The affine response of Equation~\ref{eq:response} replaces the $Mn$ free log-variances with $n(d+1)$ shared parameters, so every instance informs the same estimates. Appendix~\ref{app:amort} gives the two estimators and Figure~\ref{fig:synthetic}B compares them.

\subsection{The implemented variance loss}
\label{app:logspace}
The variance term of Section~\ref{sec:objective} is computed on the calibrated codes $y$ of Appendix~\ref{app:gauge}, split into two halves of $q=R/2$ views each. Write $S_i^{(h)}$ for the unbiased covariance of half $h$, $P_{ij}$ for the variance of coordinate $j$ pooled over all $R$ views, and $d_{ij}^{(h)}=\log(S_{i,jj}^{(h)}+\epsilon)-c_q-v_{ij}$, where $c_q=\psi((q-1)/2)-\log((q-1)/2)$ with $\psi$ the digamma function and $\epsilon=10^{-6}$. The loss is
\begin{equation}
\mathcal L_{\mathrm{var}}=\frac1{|\mathcal B|}\sum_{i\in\mathcal B}
\Bigl[\underbrace{\sum_j d_{ij}^{(1)}d_{ij}^{(2)}}_{\mathcal L_{\mathrm{logvar},i}}+
\eta\underbrace{\sum_{j\ne k}\frac{S_{i,jk}^{(1)}S_{i,jk}^{(2)}}{(P_{ij}+\epsilon)(P_{ik}+\epsilon)}}_{\mathcal L_{\mathrm{corr},i}}\Bigr],\qquad\eta=1.
\label{eq:logloss}
\end{equation}
Each half covariance uses its own mean and divisor $q-1$. The pooled variance uses all $R$ views and divisor $R-1$, and gradients flow through it. The floor $\epsilon$ is added inside the logarithm and inside the pooled denominators; nothing is clipped. The off-diagonal sum runs over ordered pairs, so each unordered pair counts twice. The loss sums over coordinates and averages over instances. The image runs and the nonlinear synthetic runs that train this term both use $R=8$; the synthetic runs apply the loss to uncalibrated codes (Appendix~\ref{app:synthetic}).

\paragraph{Why split the views.} Squaring a single within-instance estimate adds its sampling variance to the loss. For raw covariance matching this addition moves the optimum: if $(R-1)\hat\Sigma_i$ is Wishart with scale $\Sigma_i$ \citep{wishart1928}, then for a fixed target $\Gamma_i$
\begin{equation*}
\E\|\hat\Sigma_i-\Gamma_i\|_F^2=\|\Sigma_i-\Gamma_i\|_F^2+\frac{(\tr\Sigma_i)^2+\|\Sigma_i\|_F^2}{R-1},
\end{equation*}
which a covariance smaller than the target minimizes. Products of independent halves remove the added term; this is the device of cross-validated distance estimates \citep{diedrichsen2016crossvalidated}. For conditionally independent Gaussian views and $\epsilon=0$, the two sums in Equation~\ref{eq:logloss} have expectations
\begin{equation*}
\E\bigl[d_{ij}^{(1)}d_{ij}^{(2)}\bigr]=\bigl(\log\Sigma_{i,jj}-v_{ij}\bigr)^2,\qquad
\E\bigl[S_{i,jk}^{(1)}S_{i,jk}^{(2)}\bigr]=\Sigma_{i,jk}^2 .
\end{equation*}
The first uses $c_q$, which makes $\log S_{i,jj}^{(h)}-c_q$ unbiased for $\log\Sigma_{i,jj}$; the second needs only independent halves. In the log-variance term a single-sample square would add only the constant $\psi'((R-1)/2)$, so here the split does not prevent collapse; it gives each term expectation zero at the target, which is what makes the near-zero variance readings of Table~\ref{tab:propc} interpretable. The implemented loss is still not unbiased: the floor $\epsilon$, the pooled denominators, which depend on both halves, and non-Gaussian codes all change its expectation. Individual cross-products can be negative, and so can the reported diagnostic.

\subsection{Centered deviations leave instance offsets free}
\label{app:offsets}
The variance terms compare views only through their deviations from the instance mean, so an offset that is constant within an instance cannot change them.
\begin{lemma}[Centered deviations ignore within-instance constants]
\label{lem:offsets}
Let $e$ be an encoder and $g$ a measurable function of the view that takes the same value on every view of an instance. For every finite set $V$ of views of an instance, $e$ and $e+g$ have the same centered deviations $e(x_{ir})-\frac1{|V|}\sum_{r'\in V}e(x_{ir'})$, in raw coordinates and under any common fixed affine calibration. Any loss that depends on the codes only through these deviations takes the same value for both encoders.
\end{lemma}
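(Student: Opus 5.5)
The plan is to show that the centered deviations of $e$ and of $e+g$ coincide term by term. Then any loss that reads the codes only through those deviations must take the same value for both encoders. Fix an instance $i$ and a finite set $V$ of its views. By hypothesis $g$ is constant on the views of $i$, so there is a single vector $g_i$ with $g(x_{ir})=g_i$ for every $r\in V$.

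The first step handles raw coordinates. For each $r\in V$, the empirical mean of the shifted encoder is $\frac1{|V|}\sum_{r'\in V}\bigl(e(x_{ir'})+g_i\bigr)=\frac1{|V|}\sum_{r'\in V}e(x_{ir'})+g_i$, because averaging a constant returns that constant. Subtracting this mean from $e(x_{ir})+g_i$ cancels $g_i$ and leaves the centered deviation of $e$. This is linear bookkeeping and nothing more.

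The second step handles a common fixed affine calibration $y=Te+t$, such as the frozen per-coordinate scales of Appendix~\ref{app:gauge}. Under this map the shifted encoder becomes $T(e+g)+t=(Te+t)+Tg$. The extra term $Tg$ is again constant within the instance, so the first step applies verbatim to the calibrated codes. Equivalently, centering commutes with affine maps: the deviation of $Te+t$ is $T$ applied to the deviation of $e$. Either route gives equality.

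The final step is the loss statement, which follows by composition. If a loss is a function $F$ of the collection of centered deviations over the views it uses, then equal deviations give equal values of $F$. Examples are the half-split covariances $S^{(h)}_i$, which use each half $V=V_h$ with its own mean, and the pooled variances $P_{ij}$. The only care needed, and the closest thing to an obstacle, is that the calibration must be held fixed and identical for both encoders. If it were recomputed from the codes, for instance from the standard deviations of raw instance means, it would depend on $g$. The statement's qualifier ``common fixed'' makes that explicit, and the calibration in Appendix~\ref{app:gauge} is frozen between refreshes, so the argument goes through.
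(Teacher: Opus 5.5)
Your proposal is correct and follows the paper's proof: the constant $g_i$ cancels in each centered deviation, and a common fixed affine calibration acts identically on both encoders' codes, so any loss that reads only those deviations takes the same value. Your caveat that a calibration recomputed from the codes would depend on $g$ matches the remark the paper makes immediately after the lemma.
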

\begin{proof}
The added term is the same for every $r'\in V$, so it cancels in the centered deviation, and a common affine calibration acts on both encoders alike.
\end{proof}
Two cases of $g$ matter here. Untargeted source factors such as dSprites shape and orientation are constant within an instance and visible in every view, so any function of them can enter the content code without changing $\mathcal L_{\mathrm{var}}$. And under the shift model of Appendix~\ref{app:amounts}, if a view determines both its changed content $\at_{ir}=\alpha_i+Ja_{ir}$ and its shift $Ja_{ir}$, then $g=h(\alpha_i)-\alpha_i$ is admissible for any measurable $h$: the encoder $h(\alpha_i)+Ja_{ir}$ matches every fingerprint of the intended encoder $\at_{ir}$ while carrying arbitrary instance content, including none. The ranking loss is not blind to such offsets, because its residual against a negative candidate $u$ contains the difference $g_i-g_u$ of the two instances' offsets; this is why the variance terms are trained together with instance discrimination. The lemma holds under a fixed calibration, whereas the learner recomputes its calibration from the bank spread, which the offsets change, so two separately trained encoders are not exactly in the lemma's setting.

%% file: appendices/03_implementation.tex
\section{Image implementation and evaluation}
\label{app:impl}
Every reported image run uses the Gaussian score of Equation~\ref{eq:score} with eight views, trains its full 400-epoch budget without early stopping, and is evaluated at that final checkpoint. The studies differ only in the entries of Table~\ref{tab:response-inventory}.

\subsection{From augmentation shifts to the response model}
\label{app:amounts}
Each view applies a realized shift $a_{ir}\in\R^d$, one coordinate per augmentation family, which the learner never sees. Linearizing the renderer and the encoder in the shift,
\[
\at_{ir}=\alpha_i+J\,a_{ir},\qquad a_{ir}\mid i\sim\prod_k\mathcal N\big(0,\ \sigma_{0k}^2e^{2s_{ik}}\big),
\]
where $\alpha_i=m(i)$ is the instance's source content, $\sigma_{0k}$ is family $k$'s shift standard deviation at zero strength, and $J\in\R^{n\times d}$ collects the factor responses. A translation adds to position and a zoom adds to log-scale, so for these the relation is exact. Theorem~\ref{thm:fingerprint} conditions on the recorded strength, not on the realized shift: its observation law integrates $a_{ir}$ out, and the shift enters as the within-instance content noise with
\[
\Cov(\at_{ir}\mid i)=J\operatorname{diag}\big(\sigma_{0k}^2e^{2s_{ik}}\big)J^\top,\qquad
\Var(\at_{ir,j}\mid i)=\textstyle\sum_kJ_{jk}^2\,\sigma_{0k}^2e^{2s_{ik}}.
\]

\paragraph{Conditions for the prescribed response.} Two conditions on $J$ give the diagonal, log-affine response of Equation~\ref{eq:response}. The covariance is diagonal when every column of $J$ has at most one nonzero entry, that is, when no family changes two coordinates; a family that does induces conditional correlations. The log-variance of coordinate $j$ is affine in $s_i$ when row $j$ has exactly one nonzero entry, in column $k(j)$: then $\log\Var(\at_{ir,j}\mid i)=b_{0j}+2s_{ik(j)}$, a fingerprint of slope~2 in that family's strength, whatever affine rescaling the encoder applies to the coordinate. With two nonzero entries the log-variance is not affine on any open strength set: for $J=[1\;\;1]$ and unit magnitudes it is $\log(e^{2s_{i1}}+e^{2s_{i2}})$. Under both conditions distinct coordinates use distinct families, so the fingerprints $2e_{k(j)}$ are pairwise distinct and nonzero and Theorem~\ref{thm:fingerprint}(ii) applies. One independently controlled family per target coordinate is the intended construction and motivates the declared $B=2I_3$ on dSprites. The implemented dSprites zoom, however, is applied about the image center rather than the sprite, so it also translates an off-center sprite by an amount proportional to its position: the zoom column of $J$ acquires instance-dependent position entries, and the position covariance is then neither exactly diagonal nor exactly log-affine. The declared $B=2I_3$ is the ideal response of the design, not an exact property of the implemented augmentation.

\paragraph{What the image implementation does not satisfy.}
\label{app:limits}
The image experiments test the design, not the theorem's hypotheses, and depart from them in five ways. First, the content law can be degenerate: with $d<n$ families the covariance above has rank at most $d$, which violates the nondegeneracy in Assumption~\ref{ass:ivae}(i). On Shapes3D six coordinates answer to three hue strengths, and a smooth injective two-coordinate encoding of a continuously perturbed hue has a curve-supported conditional law even when its covariance is full rank. This restricts the shift construction, not the theorem: in the worked example of Section~\ref{sec:obstruction}, one strength scales two independent jitter draws. Second, source factors that no family targets, such as dSprites shape and orientation, are constant within an instance and vary across instances; they are neither nondegenerate content nor style whose law depends only on $\tau$, so the image model does not satisfy Equation~\ref{eq:shared}. Third, nonlinear encoder responses, clipping, resampling, finite magnitudes and pixel artifacts can all break the ideal variance law; the slope-2 target is a modeling choice for the implementation, not a verified property of the trained representation. Fourth, the Shapes3D region masks are fixed soft geometric masks, a central disc for the object and lower and upper bands for floor and wall, computed from pixel coordinates without segmentation labels; they approximate the benchmark layout rather than intervening on exact regions. Fifth, we do not test whether an augmentation acts only through the factor it targets, that is, whether the view depends on the shift only through the content the shift changed.

\subsection{Data, encoder and optimization}
\label{app:data}
\paragraph{Data.} dSprites \citep{matthey2017dsprites}: 737k binary $64\times64$ sprites with shape, scale, orientation and position. Shapes3D \citep{burgess2018shapes3d}: 480k $64\times64$ renders of a room with floor, wall and object hue, object scale, shape and camera orientation. A seeded subsample of 20,000 instances of each is cached at $64\times64$. Semantic factor values are evaluation targets only.

\paragraph{Encoder.} Every run uses the same convolutional encoder: four $4\times4$ stride-2 convolutions with 64, 128, 256 and 256 channels, each followed by a leaky ReLU of slope 0.2 and no normalization layer, then a 256-unit hidden layer with the same activation and two linear heads. The content head has $n$ outputs. The second head, called the style head in the code, has $n_\beta$ outputs, three on dSprites and two on Shapes3D; no training loss reads it and no reported metric uses it.

\paragraph{Optimization.} Adam \citep{kingma2015adam} with learning rate $10^{-3}$ and no weight decay, batches of 256 instances, ten epochs of linear warm-up and then cosine decay to 1\% of the base rate over the 400-epoch budget. The variance weight $\gamma$ ramps linearly from zero to one over the first forty epochs.

\paragraph{What the learner receives.} Each run receives images grouped by source instance and one strength vector $s_i\sim\mathcal N(0,\omega^2I_d)$ per instance, drawn once into a seeded table shared across optimization seeds (Algorithm~\ref{alg:step}, steps 1 and 3). The ranking score and the variance term read only $s_i$: $d$ numbers per instance and none per view. No objective reads the realized shifts or the semantic factors.

\paragraph{Response configuration.}
Table~\ref{tab:response-inventory} lists the coupling $B$ each study uses. A \emph{declared} coupling fixes $B$ at the design's value: $2I_3$ on dSprites, and on Shapes3D the $6\times3$ matrix whose rows assign two coordinates to each hue. A \emph{dense} coupling allocates the full $n\times d$ matrix, initialized at zero and free in every entry, with no support pattern, sparsity mask or sign constraint. In every configuration the log-variance intercepts $b_0$ are free parameters, initialized from the untrained encoder's calibrated within-instance dispersion (floored at $10^{-8}$ before the logarithm); the held-intercept controls of Table~\ref{tab:gauge} are the exception. For the learned-response runs, the fitted $B$ is saved beside the evaluation codes and is the matrix Appendix~\ref{app:learned-response} reads.

\begin{table}[htbp]
\centering\small\setlength{\tabcolsep}{3pt}
\caption{Response information supplied to each image study. $n/n_\beta$ gives the content and unused-head widths. Base amplitudes are shift standard deviations at $s=0$; translations are in normalized image coordinates, in which the image spans $[-1,1]$, so 0.15 is 7.5\% of the image side. All studies share the encoder, view count, budget and intercept initialization.}\label{tab:response-inventory}
\begin{tabular}{@{}>{\raggedright\arraybackslash}p{3.4cm}>{\raggedright\arraybackslash}p{0.9cm}>{\raggedright\arraybackslash}p{4.4cm}>{\raggedright\arraybackslash}p{3.0cm}>{\raggedright\arraybackslash}p{0.8cm}@{}}
\toprule
Study & $n/n_\beta$ & Coupling $B$ & Base amplitude & Seeds \\
\midrule
Prescribed-response dSprites (Figure~\ref{fig:dsprites-controls}) & 3/3 & Declared $2I_3$; columns $t_x$, $t_y$, log-zoom & 0.15 (each translation), 0.2 (log-zoom) & 3 \\
Learned-response dSprites (Figure~\ref{fig:images}) & 3/3 & Dense $3\times3$, learned from zero & as above & 5 \\
Shapes3D hue pairs (Table~\ref{tab:holdout}) & 6/2 & Declared rows $2e_1,2e_1,2e_2,2e_2,2e_3,2e_3$ & 0.05 turns per hue & 6 \\
\bottomrule
\end{tabular}
\end{table}

\subsection{Memory bank, calibration and the training step}
\label{app:gauge}
At initialization and at every epoch we encode eight newly sampled augmented views of every training instance and store the instance means $m_u^{\mathrm{raw}}$; these refresh views come from the augmenter's seeded stream and are not the fixed evaluation views. At the same refresh we measure the unbiased standard deviation $\varsigma_j$ of the raw means across instances, floored at $10^{-12}$, and set
\begin{equation*}
y_{ir}=D^{-1}\hat\alpha_{ir},\qquad m_u=D^{-1}m_u^{\mathrm{raw}},\qquad D=\operatorname{diag}(\varsigma_1,\ldots,\varsigma_n),
\end{equation*}
so the stored coordinates have unit empirical spread at the refresh. $D$ is detached: the gradient cannot move the current calibration, but the encoder can change the spread measured at the next refresh. After each minibatch, the batch's stored means receive an exponential moving average update with weight 0.3 on the new batch mean, in the current calibrated units \citep{wu2018instance}.

Rescaling a code coordinate moves its log-variance intercept but not its fingerprint. The ranking score has an exact unit symmetry: the joint change $y_j,m_{uj}\mapsto a_j(y_j,m_{uj})$ with $b_{0j}\mapsto b_{0j}+2\log|a_j|$ subtracts $\sum_j\log|a_j|$ from every candidate's score, which the softmax cancels. Rescaling the codes without adjusting $b_0$ is not such a symmetry. We carry $b_0$ across refreshes unchanged, so training must absorb any change in the measured spread. Table~\ref{tab:gauge} measures, in this procedure, the effect of turning the normalization off and of holding the intercepts at their initialization.

\paragraph{Normalization and intercept controls.} Table~\ref{tab:gauge} varies the two calibration choices at $\omega=0.5$: the diagonal normalization on or off, and the intercepts $b_0$ free or held. The held-intercept rows keep $b_0$ at the initialization the free rows start from, the untrained encoder's within-instance log-variance in that row's own units, as a constant rather than a trained parameter. Turning the normalization off or holding the intercepts reduces alignment and block prediction on dSprites, and the unnormalized Shapes3D runs weaken the hue grouping (Appendix~\ref{app:shapes3d}). These runs are evaluated on the training pool only.

\input{tables/gauge_ablation}

\begin{algorithm}[tb]\small
\caption{Training with recorded strengths: augmentation, bank refresh, and one step of the Gaussian image objective.}
\label{alg:step}
\begin{enumerate}[nosep,leftmargin=2em]
\item[1.] \textbf{Once.} For each instance $i$ draw $s_i\sim\mathcal N(0,\omega^2I_d)$ with one seeded generator and store the strength table.
\item[2.] \textbf{Once per epoch.} Encode eight newly sampled views of every instance and store the instance means $m_u^{\mathrm{raw}}$; set $\varsigma_j$ to the standard deviation of $m^{\mathrm{raw}}_{uj}$ across instances, floored at $10^{-12}$, and divide the stored means by $D=\operatorname{diag}(\varsigma)$. At initialization also set $b_{0j}$ to the log of the calibrated within-instance variance of coordinate $j$, floored at $10^{-8}$.
\item[3.] \textbf{Each step.} Take a batch of 256 instances. For each, form $R=8$ views: for every view and family $k$, draw the family's shift with standard deviation $\sigma_{0k}\,e^{s_{ik}}$ and apply it. Encode every view to $\hat\alpha_{ir}$ and divide by the detached $D$.
\item[4.] \emph{Instance discrimination.} Draw one set of up to 4,096 instances uniformly without replacement from outside the batch; every view in the batch ranks its own instance against these candidates. Compute $v_u=b_0+Bs_u$, with $B$ as Table~\ref{tab:response-inventory} lists and $b_0$ free, and the score $\ell_u(y_{ir})$ of Equation~\ref{eq:score}, including its candidate-dependent log-determinant; take the cross-entropy of the softmax over candidates against the view's own instance.
\item[5.] \emph{Variance term.} Split the eight calibrated views into two halves of four and compute Equation~\ref{eq:logloss}. Its weight is $\gamma=1$ after a linear ramp from zero over the first forty epochs.
\item[6.] Step on $\mathcal L_{\mathrm{sep}}+\gamma\mathcal L_{\mathrm{var}}$ with Adam; then update the batch's bank entries by the exponential moving average with weight 0.3 on the new means.
\end{enumerate}
\end{algorithm}

\subsection{Evaluation}
\label{app:evaluation}
\paragraph{Representation, instances and targets.}
\label{app:holdout}
Every image metric reads the mean of eight evaluation views per instance. The views are sampled once with a fixed seed and encoded at the final checkpoint; no metric reads the bank. The training-pool panel uses 2,000 seeded instances from the training subsample. The held-out panel uses \holdoutN{} instances drawn once per benchmark from the raw dataset outside the training subsample, augmented under the run's own families with strengths drawn afresh from $\mathcal N(0,\omega^2)$ under a seed the training never used, and encoded with the saved weights alone. Targets are the unaugmented benchmark factors in the benchmark's own units; in particular the dSprites scale target is the benchmark's scale value, not the log-zoom in which the augmentation is additive.

\paragraph{Metrics.}
Factor MCC is the Hungarian-matched mean absolute Pearson correlation between the content coordinates and the continuous factors: horizontal position, vertical position and scale on dSprites; the three region hues, object scale and camera orientation on Shapes3D, where the matching assigns each of the five factors to a distinct coordinate and the sixth coordinate does not enter. Block $R^2$ is five-neighbor regression of each factor from the whole content code, with feature standardization fitted on a random half of the evaluation instances, scored on the other half and averaged over factors. The hue-from-pair readout of Appendix~\ref{app:hue-criteria} uses the same split.

\paragraph{Why two metrics.}
\label{app:metrics}
Whole-vector prediction cannot distinguish a representation from its mixtures. For $T\in\GL(n)$, $\sigma(T\hat\alpha)=\sigma(\hat\alpha)$, so $\E[\mu\mid T\hat\alpha]=\E[\mu\mid\hat\alpha]$ almost surely for every square-integrable factor $\mu$. Any metric computed from the conditional law of $\mu$ given the representation, such as the population $R^2$ of the best predictor, therefore takes the same value on $\hat\alpha$ and on $T\hat\alpha$, and exact least-squares fits with an intercept share the invariance because $T$ preserves the column space they project onto \citep[Table~1]{kornblith2019cka}. MCC with Hungarian matching is not invariant: rotating two independent standard factors by $45^\circ$ reduces every pairwise correlation to $1/\sqrt2$. Regularized, minimum-norm and nearest-neighbor probes need not share the invariance either, so we report a probe together with an alignment metric and read neither alone as identification.

\paragraph{Mixing references.}
\label{app:references}
For each saved view-mean representation $A$ we form 2,000 draws $AQ$, with $Q$ the orthogonal factor of the QR decomposition of a matrix of independent standard Gaussian entries; column signs do not affect absolute-correlation MCC, so these give the same reference as Haar rotations. We also form draws $AG$ with $G$ of independent standard Gaussian entries, invertible almost surely and neither conditioned on a condition number nor whitened; the counts reported in the tables and text use the rotation draws. Mixing acts on the saved raw coordinates before Pearson correlation centers and standardizes columns. Factor references correlate the mixed code with the selected factors; cross-seed references correlate it with a second seed's saved means. For each run, $\hat p$ is the fraction of draws whose MCC is at least the observed value. A run exceeds its reference when $\hat p<0.01$, and every count in the tables and text uses this rule. $\hat p$ is descriptive rather than a randomization test of recovery: trained coordinates are not exchangeable with the draws, and $\hat p=0$ means only that no draw reached the observed value, at a count resolution of $1/2000$.

%% file: tables/gauge_ablation.tex
\begingroup\small\setlength{\tabcolsep}{3pt}
\begin{longtable}{@{}lllrr@{}}
\caption{Normalization and intercept controls at $\omega=0.5$: mean [minimum, maximum] over the seeds of each condition. Shapes3D has free-intercept conditions only.}\label{tab:gauge}\\
\toprule
data & norm. & intercept & MCC & kNN $R^2$ \\
\midrule\endfirsthead
\multicolumn{5}{l}{\tablename\ \thetable\ (continued)}\\
\toprule
data & norm. & intercept & MCC & kNN $R^2$ \\
\midrule\endhead
\bottomrule\endfoot
dsprites & on & free & \shortstack{0.935\\{}[0.933, 0.938]} & \shortstack{0.858\\{}[0.855, 0.862]} \\
dsprites & on & held & \shortstack{0.391\\{}[0.275, 0.590]} & \shortstack{0.255\\{}[0.083, 0.563]} \\
dsprites & off & free & \shortstack{0.259\\{}[0.123, 0.488]} & \shortstack{0.191\\{}[-0.025, 0.545]} \\
dsprites & off & held & \shortstack{0.298\\{}[0.285, 0.318]} & \shortstack{0.095\\{}[0.059, 0.128]} \\
shapes3d & on & free & \shortstack{0.397\\{}[0.305, 0.452]} & \shortstack{0.295\\{}[0.250, 0.351]} \\
shapes3d & off & free & \shortstack{0.317\\{}[0.296, 0.336]} & \shortstack{0.201\\{}[0.186, 0.220]} \\
\end{longtable}\endgroup

%% file: appendices/04_experiments.tex
\section{Experimental details and additional results}
\label{app:synthetic}

\subsection{Synthetic experiments}
\label{app:synthetic-details}
\paragraph{Data.} The recovery studies draw instance means $m(i)\sim\mathcal N(0,I_4)$ independently of strengths $s_i\sim\mathcal N(0,\omega^2I_d)$, then sample conditionally independent Gaussian views with log-variances $Bs_i$ and zero intercept. Linear observations are $x_{ir}=A\alpha_{ir}$, where $A$ has independent standard Gaussian entries and is redrawn until its condition number is at most 10. The scalar designs use $B=(1,1,1,1)^\top$ or $B=(0.5,0.8333,1.1667,1.5)^\top$; the vector designs use $B=I_4$ or a fixed matrix with independent $\mathcal N(0,1/4)$ entries drawn with seed 7. Each of the five seeds redraws the data and $A$. A single strength with distinct nonzero slopes gives pairwise distinct nonzero fingerprints, so the natural-parameter rank condition of Theorem~\ref{thm:fingerprint}(i) holds even though the log-variances $Bs_i$ have covariance $B\Cov(s)B^\top$ of rank one. Figure~\ref{fig:synthetic}A uses $M=2000$ instances, $R=4$ views and strength variation $\omega\in\{0.05,0.1,0.2,0.35,0.5,1\}$ for the vector designs, $\{0.2,0.5,1\}$ for the shared strength with distinct slopes and $\omega=1$ for equal slopes. Figure~\ref{fig:synthetic}B uses $B=I_4$, $\omega=0.5$ and $R\in\{2,3,4,8,16\}$. Amari distance is the normalized index of \citet{amari1996}; its chance level, the mean over 2,000 Haar-random rotations, is 0.42 at $n=4$, and an unidentified estimator need not sit at that line.

\paragraph{Separate and pooled variance estimators.}
\label{app:amort}
Both linear estimators remove the per-instance means with $R-1$ orthonormal view contrasts, the error contrasts of restricted likelihood \citep{patterson1971reml}, so two views suffice to define the objective. For an unmixing matrix $W$ with rows $w_j$, write $Q_{ij}(W)=\sum_r\{w_j^\top(x_{ir}-\bar x_i)\}^2$. Given log-variances $v_{ij}$, both minimize, up to constants,
\[
F(W,v)=-\log|\det W|+\frac1{2M}\sum_{i,j}\Big[v_{ij}+\frac{Q_{ij}(W)\,e^{-v_{ij}}}{R-1}\Big].
\]
The separate estimator leaves all $Mn$ values $v_{ij}$ free, which profiles them to $\log\{Q_{ij}(W)/(R-1)\}$; this is the block-Gaussian likelihood of \citet{pham2001blind} with one block per instance. The pooled estimator sets $v_{ij}=b_{0j}+b_j^\top s_i$ and fits the $n(d+1)$ response parameters jointly with $W$. Both are fit by L-BFGS-B, at most 3,000 iterations per start, from a whitening start and two random-rotation restarts, keeping the best objective. For $R\le n$ the profiled objective is unbounded below: a row of $W$ orthogonal to one instance's contrasts sends that instance's $\log Q_{ij}$ to $-\infty$ while $\log|\det W|$ stays bounded, so the reported separate-estimator values at $R\le4$ are the local minima this procedure finds. Pooling removes the degeneracy because no single instance controls its own variance. At $R=2$ the separate estimator is worse than chance on average, 0.48 against 0.42, whereas pooling reaches Amari error 0.030 (Figure~\ref{fig:synthetic}B). %

\paragraph{Pooling requires informative strengths.} Pooling helps only when the recorded strengths explain the variance variation. Table~\ref{tab:amort_miss} holds the total log-variance dispersion at 0.25 and varies how much of it the strengths explain: all (T1), none (T2) or half (T3). Pooling improves on separate estimation in T1 and T3. In T2 it is worse in every seed: uninformative strengths leave the pooled model with nearly constant variances, the ambiguous model of Lemma~\ref{lem:homo}. Recording strengths is therefore not sufficient by itself; the response model must capture the variance variation.

\begin{table}[tb]\caption{Three truth constructions with matched total log-variance dispersion (0.25) at $R=4$, $M=2000$ and five seeds; entries are mean Amari error, lower being better. T1 assigns all log-variance variation to the recorded response, T2 assigns it to unrecorded variation and T3 splits it equally; the unrecorded standard deviations are 0, 0.500 and 0.354. The separate estimator (column ``plug'') sees the same log-variance distribution in all three truths; its row differences come from single seeds near chance in T2 and T3. The pooled estimator is column ``D''.}\label{tab:amort_miss}\begin{center}\small\input{tables/revision_amort_miss}\end{center}\end{table}

\paragraph{Nonlinear data.} Content follows the same Gaussian construction with $B=I_4$. Independently for each view, $\tau_{ir}\sim\mathcal N(0,I_2)$ and $\beta_{ir}\mid\tau_{ir}\sim\mathcal N(\tau_{ir},0.3^2I_2)$, independently of content. The decoder acts on the six concatenated coordinates through three affine layers with leaky-ReLU activations of slope 0.2 between them; each weight matrix has independent $\mathcal N(0,1/6)$ entries, redrawn until its condition number is at most 5, and biases have independent $\mathcal N(0,0.1^2)$ entries. The decoder, means and strengths are drawn once and shared by the training seeds, which differ in encoder initialization and view draws; content and style views are resampled each epoch.

\paragraph{Nonlinear learner.} The encoder is a three-layer MLP of width 128 with leaky ReLU of slope 0.2 and two linear heads, four content and two style outputs. The Gaussian score receives the true $B$ and learns its intercepts, except in the learned-response study below, which fits $B$ as well; the variance term of Equation~\ref{eq:logloss} is applied to the raw codes with $R=8$ and adds its own free intercept. Unlike the image learner, the bank stores raw instance means updated by an exponential moving average with weight 0.3 on the new mean, without per-epoch refresh or calibration, and every out-of-batch instance is a negative candidate. The style head regresses the recorded per-view auxiliary $\tau_{ir}$ by squared error with unit weight, so this learner reads $\tau$, which the theorem's observation law also contains. Training uses Adam at $10^{-3}$ with batches of 250 instances, 100 warm-up epochs and then cosine decay to 1\%, for the full 6,000 epochs with $\gamma$ held constant at 0 or 1; the learned-response study and its true-$B$ reference instead use four views, $\gamma=0$ and early stopping, as stated below.

\paragraph{Nonlinear results.} Table~\ref{tab:propc} reports the Gaussian score with and without the variance term at $\omega=0$ and $0.5$. View MCC scores individual encoded views against their latent realizations, since the encoder estimates $\alpha_{ir}$. Mean block $R^2$ regresses the true means $m(i)$ on the eight-view code mean; the same regression from the true content mean gives its ceiling, 0.888 at $\omega=0$ and 0.877 at $\omega=0.5$, so every cell sits at its ceiling and the lower value at $\omega=0.5$ reflects the larger within-instance variance, not lost information. Amari distance is computed from the least-squares linear map between the true means and the code means. With strength variation both variants reach near-exact alignment; without it neither does. The ranking score already conditions on the strengths, so removing the explicit variance term does not remove the response information.

\keeptogether{16}
\input{tables/propc}

\paragraph{Learned dense response.}
Table~\ref{tab:synthdense} withholds $B$ from the Gaussian score and replaces it with a dense $4\times4$ matrix, initialized at zero and free in every entry, with no support pattern, so the learner is not told which strength drives which coordinate; this is the synthetic counterpart of the learned-response dSprites sweep of Appendix~\ref{app:learned-response}. The intercepts $b_0$ are learned as in every Gaussian configuration, and $B$ and $b_0$ train with the encoder at the same Adam rate. The data are those of the nonlinear paragraphs above, with $B=I_4$ and slope 1. The configuration differs from Table~\ref{tab:propc} in three ways: four views per instance instead of eight, no variance term ($\gamma=0$ throughout), and early stopping on a training-loss plateau (the last twelve readings of the ranking loss, taken every 100 epochs, spread and drift within 1\% of their scale, after at least 500 epochs), which ended the runs between 1,500 and 3,000 epochs. The sweep covers $\omega\in\{0,0.2,0.5,1\}$ with five training seeds each. Its reference is a run of the identical configuration whose score receives the true $B$, three seeds at $\omega=0.5$, the row marked true; the dense rows should be read against that row and not against Table~\ref{tab:propc}, whose rows use eight views and the full budget.

Without strength variation, alignment fails as it does in Table~\ref{tab:propc}: view MCC is \synthDenseMccOff{}. There every $s_i=0$, so $b_0+Bs_i$ does not depend on $B$, which receives no gradient and stays exactly zero; the fitted-$B$ readouts are undefined at that strength. At $\omega=0.2$, $0.5$ and $1$, view MCC is \synthDenseMccTwo{}, \synthDenseMccHalf{} and \synthDenseMccOne{}, with no seed below \synthDenseMccOnLo{}. The true-$B$ reference reaches \synthOracleMccHalf{} at $\omega=0.5$, so withholding $B$ does not change alignment in this configuration. Mean block $R^2$ sits at the ceilings stated above at $\omega=0$ and $0.5$; it is lower at $\omega=1$, where no ceiling has been computed.

We read the fitted $B$ with the three readouts that Appendix~\ref{app:learned-response} defines for the dSprites sweep, with strengths in place of augmentation families: concentration $C(B)$, the share of the matrix's absolute mass that a one-to-one matching of coordinates to strengths collects; the matched slopes, the entries $|B_{j,\pi(j)}|$ that matching selects; and agreement, which holds for a seed when, for every coordinate, the strength $B$ assigns it to is the strength that drives, in the true $B$, the latent content coordinate the encoded coordinate aligns with under a Hungarian matching of absolute correlations. At every $\omega>0$ concentration is at least \synthDenseShareLo{}, the matched slopes lie between \synthDenseSlopeLo{} and \synthDenseSlopeHi{} against the true slope 1, and all \synthDenseAgree{} of \synthDenseAgreeN{} seeds agree. The learner therefore recovers not only the coordinates but the assignment of strengths to coordinates, as a near-permutation matrix with slopes near the true value, without the variance term. These are descriptive readouts of one coupling, $B=I_4$; no other coupling was run in this configuration.

\begin{table}[tb]
\centering\scriptsize\setlength{\tabcolsep}{2.5pt}
\caption{Nonlinear synthetic recovery with the coupling withheld. The Gaussian score learns a dense $4\times4$ $B$ from zero (rows marked dense, five seeds) or receives the true $B=I_4$ (the row marked true, three seeds), in both cases with four views, $\gamma=0$ and early stopping. Entries are the seed mean [minimum, maximum]. Concentration, matched slopes and agreement read the fitted $B$ as in Table~\ref{tab:densefit}, with strengths in place of augmentation families and true slope 1; matched slopes span the selected entries of all seeds, and agreement counts the seeds whose fitted assignment matches their latent alignment for all four coordinates. These readouts are undefined at $\omega=0$, where $B$ receives no gradient and stays zero, and do not apply in the true row, whose $B$ is supplied rather than fitted.}
\label{tab:synthdense}
\begin{tabular}{@{}lcc rrr rrc@{}}\toprule
Setting & $\omega$ & Seeds & View MCC & \shortstack{Mean block\\$R^2$} & Amari & \shortstack{Concentration\\$C(B)$} & \shortstack{Matched\\slopes} & Agreement \\\midrule
dense & 0 & 5 & 0.778 [0.659, 0.868] & 0.887 [0.887, 0.887] & 0.407 [0.318, 0.546] & undefined ($B=0$) & -- & -- \\
dense & 0.2 & 5 & 0.997 [0.996, 0.997] & 0.885 [0.885, 0.886] & 0.017 [0.011, 0.021] & 0.97 [0.97, 0.98] & 0.97 to 1.02 & 5/5 \\
dense & 0.5 & 5 & 0.997 [0.996, 0.998] & 0.876 [0.875, 0.876] & 0.018 [0.009, 0.026] & 0.99 [0.98, 0.99] & 0.97 to 1.02 & 5/5 \\
dense & 1 & 5 & 0.998 [0.997, 0.999] & 0.833 [0.833, 0.834] & 0.020 [0.012, 0.026] & 0.99 [0.99, 0.99] & 0.96 to 1.01 & 5/5 \\
true & 0.5 & 3 & 0.995 [0.993, 0.996] & 0.875 [0.874, 0.875] & 0.025 [0.022, 0.027] & -- & -- & -- \\
\bottomrule\end{tabular}
\end{table}

\subsection{Prescribed-response dSprites and controls}
\label{app:dsprites-primary}
\label{app:images}
Table~\ref{tab:holdout} reports the prescribed-response dSprites conditions and the Shapes3D design on the training pool and on the held-out instances of Appendix~\ref{app:holdout}; Table~\ref{tab:learnedresponse} does the same for the learned-response sweep. Together they cover the \holdoutCells{} reported encoders: twelve prescribed-response dSprites, twenty learned-response dSprites and twelve Shapes3D. Across these encoders the seed-mean MCC and kNN block $R^2$ change by at most \holdoutMaxGapMcc{} and \holdoutMaxGapKnn{} between the two instance sets. The held-out instances share the benchmarks and augmentation distributions with training, so they test transfer to new instances, not to new intervention families or unseen factor combinations.

At $\omega=0.5$ with $B=2I_3$, the three seeds reach factor MCC \dsMainMccSeeds{} and mean kNN block $R^2$ \dsMainKnn{}. No rotation draw reaches any run's MCC; the runs' reference 99th percentiles lie between 0.65 and 0.81. Between seeds, MCC lies between 0.998 and 1.000, and each of the three seed pairs exceeds its cross-seed reference, whose 99th percentiles lie between 0.70 and 0.86; the pairs share trained encoders and are not independent replications. %

The controls use the same reference construction, recomputed for each run. Without strength variation, block prediction stays high (kNN $R^2$ \dsOmZeroKnn{}) while MCC falls to \dsOmZeroMcc{} and is \dsOmZeroNull{}. Permuting strengths across instances leaves the views unchanged but supplies another instance's strengths to both the score and the variance term; MCC then stays \dsPermNull{}, on both instance sets. Removing the variance term lowers MCC to \dsGzeroMcc{}, yet MCC is still \dsGzeroNull{}, and \dsGzeroXsPairs{} seed pairs exceed their cross-seed reference. Exceeding a reference therefore does not certify recovery: stable weak or unwanted features can pass these checks.

\begin{figure}[tb]
\centering
\includegraphics[width=.82\linewidth]{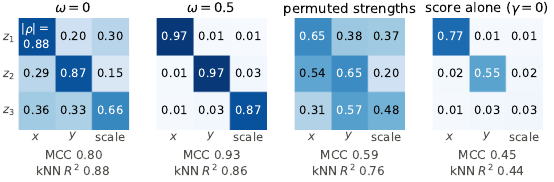}
\caption{\textbf{Prescribed-response dSprites and its controls} ($B=2I_3$, three seeds per condition), in the form of Figure~\ref{fig:images}A: absolute correlation between each learned coordinate and each targeted factor, with coordinates matched to factors in each run and entries averaged over seeds. Permuted strengths mix the factors across coordinates; the score alone leaves the scale factor unencoded.}
\label{fig:dsprites-controls}
\end{figure}

\begin{table}[tb]
\centering\footnotesize\setlength{\tabcolsep}{3pt}
\caption{Prescribed-response dSprites conditions and the Shapes3D design on the training-pool evaluation instances and on \holdoutN{} instances outside encoder training: seed mean $\pm$ sample standard deviation of factor MCC and kNN block $R^2$, and the number of runs exceeding their rotation reference ($\hat p<0.01$) on the training pool and on the held-out instances (refs.). The Shapes3D hue pairs are judged against the hue-from-pair reference of Appendix~\ref{app:hue-criteria} instead; a dash marks those rows.}
\label{tab:holdout}
\begin{tabular}{@{}lcc rr rr c@{}}\toprule
 & & & \multicolumn{2}{c}{Factor MCC} & \multicolumn{2}{c}{kNN $R^2$} & Refs. \\
Condition & $\omega$ & Seeds & training pool & held-out & training pool & held-out & \\\midrule
\multicolumn{8}{@{}l}{\textbf{dSprites}} \\
Strength variation on & 0.5 & 3 & 0.935 $\pm$ 0.003 & 0.934 $\pm$ 0.003 & 0.858 $\pm$ 0.004 & 0.855 $\pm$ 0.006 & 3/3; 3/3 \\
Strength variation off & 0 & 3 & 0.803 $\pm$ 0.066 & 0.801 $\pm$ 0.066 & 0.880 $\pm$ 0.003 & 0.879 $\pm$ 0.003 & 1/3; 1/3 \\
Permuted strengths & 0.5 & 3 & 0.592 $\pm$ 0.020 & 0.584 $\pm$ 0.020 & 0.764 $\pm$ 0.022 & 0.766 $\pm$ 0.015 & 0/3; 0/3 \\
Variance term removed & 0.5 & 3 & 0.452 $\pm$ 0.240 & 0.445 $\pm$ 0.248 & 0.439 $\pm$ 0.224 & 0.410 $\pm$ 0.227 & 3/3; 3/3 \\
\addlinespace
\multicolumn{8}{@{}l}{\textbf{Shapes3D}} \\
Hue pairs, variation on & 0.5 & 6 & 0.397 $\pm$ 0.056 & 0.390 $\pm$ 0.055 & 0.295 $\pm$ 0.039 & 0.284 $\pm$ 0.029 & -- \\
Hue pairs, variation off & 0 & 6 & 0.340 $\pm$ 0.058 & 0.335 $\pm$ 0.059 & 0.236 $\pm$ 0.047 & 0.236 $\pm$ 0.045 & -- \\
\bottomrule\end{tabular}
\end{table}

\subsection{Learned-response dSprites}
\label{app:learned-response}
The learned-response sweep replaces the declared coupling with the dense $3\times3$ matrix of Table~\ref{tab:response-inventory}, learned from zero, and otherwise matches the prescribed-response study; the coordinate-to-family assignment is withheld, the factor-targeted families are not. Moderate strength variation improves alignment: mean factor MCC is \trainDspMccTwo{} at $\omega=0.2$ and \trainDspMccHalf{} at $\omega=0.5$, against the $\omega=0$ row of Table~\ref{tab:learnedresponse}, and no rotation draw reaches the observed value in any run at these two strengths. At $\omega=1$ both alignment and block prediction fall.

\paragraph{Reading the fitted coupling.}
Alignment alone would not show that the learner found an assignment rather than tolerating its absence, so we read the fitted $B$ through three descriptive readouts per seed (Table~\ref{tab:densefit}). \emph{Concentration},
\begin{equation*}
C(B)=\frac{\max_{\pi}\sum_j|B_{j,\pi(j)}|}{\sum_{j,k}|B_{jk}|},
\end{equation*}
is the share of the matrix's absolute mass that a one-to-one matching of coordinates to families can collect; it equals one when each row and column has a single nonzero entry. The \emph{matched slopes} are the entries $|B_{j,\pi(j)}|$ selected by the maximizing matching. \emph{Agreement} compares two matchings computed independently: the maximizing $\pi$ assigns each coordinate to an augmentation family, and a Hungarian matching of the coordinates' absolute correlations with the benchmark factors assigns each coordinate to a factor. A seed agrees when, for all three coordinates, the family the coordinate answers to in $B$ is the family that shifts the factor it aligns with. At $\omega=0$ all $s_i$ vanish, $b_0+Bs_i$ does not depend on $B$, and the dense matrix stays at zero, so these readouts are undefined there.

Where alignment is high, at $\omega=0.2$ and $0.5$, concentration is high and every seed agrees: the fitted mass sits on one entry per row, the shape a declared assignment would have. The matched slopes stay below the ideal additive-shift value of two (Appendix~\ref{app:amounts}). At $\omega=1$ concentration falls with alignment and one seed no longer agrees.

\begin{table}[tb]
\centering\small
\caption{Fitted-coupling readouts of the learned-response sweep, five seeds per strength. Concentration is the seed mean [minimum, maximum]; matched slopes span the selected entries of all seeds; agreement counts the seeds whose fitted assignment matches their factor alignment for all three coordinates.}
\label{tab:densefit}
\begin{tabular}{@{}crrr@{}}\toprule
$\omega$ & Concentration $C(B)$ & Matched slopes & Agreement \\\midrule
0 & undefined ($B=0$) & -- & -- \\
0.2 & 0.78 [0.60, 0.83] & 0.82 to 1.66 & 5/5 \\
0.5 & 0.90 [0.88, 0.92] & 1.64 to 1.71 & 5/5 \\
1 & 0.58 [0.37, 0.81] & 0.05 to 0.65 & 4/5 \\
\bottomrule\end{tabular}
\end{table}

\subsection{Shapes3D hue pairs}
\label{app:shapes3d}
The grouping experiment assigns two coordinates to each of three region-wise hue rotations, because two coordinates can encode a full circle continuously and one cannot do so injectively. Under the Gaussian model this repeated-fingerprint pattern identifies three two-dimensional blocks (Theorem~\ref{thm:fingerprint}(iii)); Appendix~\ref{app:limits} states how the circular content departs from that model.

\paragraph{Hue-from-pair readout and its mixing reference.}
\label{app:hue-criteria}
Each prescribed coordinate pair predicts each hue circle $(\cos2\pi h,\sin2\pi h)$ by five-neighbor regression, with feature standardization fitted on one half of the evaluation instances and scored on the other, averaging the two $R^2$ values; this gives a three-pair by three-hue matrix per run. A Hungarian assignment maximizes the sum over pairs and hues; the \emph{assigned mean} averages the three selected entries and the \emph{off-assignment mean} the remaining six. The mixing reference mixes the six-dimensional content code of the same run and rereads it the same way: each of 200 draws forms an orthogonal $Q$ from the QR decomposition of a standard Gaussian matrix, recomputes the matrix on the mixed code with the prescribed pairs unchanged, and repeats the Hungarian assignment within the draw. The compared statistic is the assigned mean, and $\hat p$ is the fraction of draws whose assigned mean is at least the observed one; 200 draws give a resolution of $1/200$, coarser than the 2,000-draw MCC references.

\paragraph{Results.} Table~\ref{tab:s3dscores} lists every seed's assigned and off-assignment means with its matched hue order. With strength variation, every seed takes the prescribed hue order and exceeds its reference at $\hat p<0.01$, but the assigned means span a wide range, so the shared assignment is not uniformly strong recovery. Without strength variation the matched orders differ across seeds and the assigned means stay close to the off-assignment means. The three unnormalized seeds reach lower assigned means still and do not all take the prescribed order. Cross-seed reference exceedances at $\hat p<0.01$ occur in \sThreeXsOn{} of \sThreeXsPairs{} seed pairs with strength variation and \sThreeXsOff{} of \sThreeXsPairs{} without it, so cross-seed agreement alone does not establish the intended grouping. Content MCC over the five continuous factors (Appendix~\ref{app:evaluation}) changes from \trainGaussSthreeMcc{} to \holdoutGaussSthreeMcc{} on held-out instances (Table~\ref{tab:holdout}).

\keeptogether{22}
\input{tables/s3d_matrices}

%% file: tables/revision_amort_miss.tex
\begin{tabular}{lrr}
\toprule
truth & plug & D \\
\midrule
T1 recorded only & 0.0749 & 0.0122 \\
T2 unrecorded only & 0.1241 & 0.3042 \\
T3 half \& half & 0.1442 & 0.0212 \\
\bottomrule
\end{tabular}

%% file: tables/propc.tex
\begingroup\small\setlength{\tabcolsep}{3pt}
\begin{longtable}{@{}lrrrrrr@{}}
\caption{Nonlinear synthetic results for the Gaussian score with ($\gamma=1$) and without ($\gamma=0$) the variance term: mean [minimum, maximum] over three seeds at $n=4$, eight views and the full 6,000-epoch budget. The variance diagnostic uses the log-variance/correlation reduction of Equation~\ref{eq:logloss}; in the $\gamma=0$ rows it is a readout, not a trained term.}\label{tab:propc}\\
\toprule
setting & $\gamma$ & $\omega$ & view MCC & \shortstack{mean block\\$R^2$} & Amari & variance term \\
\midrule\endfirsthead
\multicolumn{7}{l}{\tablename\ \thetable\ (continued)}\\
\toprule
setting & $\gamma$ & $\omega$ & view MCC & \shortstack{mean block\\$R^2$} & Amari & variance term \\
\midrule\endhead
\bottomrule\endfoot
Gaussian & 1 & 0 & \shortstack{0.729\\{}[0.720, 0.740]} & \shortstack{0.887\\{}[0.887, 0.887]} & \shortstack{0.490\\{}[0.468, 0.504]} & \shortstack{-0.03\\{}[-0.06, -0.01]} \\
Gaussian & 1 & 0.5 & \shortstack{0.999\\{}[0.999, 0.999]} & \shortstack{0.876\\{}[0.876, 0.876]} & \shortstack{0.010\\{}[0.009, 0.011]} & \shortstack{-0.00\\{}[-0.03, 0.03]} \\
Gaussian & 0 & 0 & \shortstack{0.768\\{}[0.708, 0.843]} & \shortstack{0.887\\{}[0.887, 0.888]} & \shortstack{0.428\\{}[0.357, 0.490]} & \shortstack{0.01\\{}[-0.02, 0.05]} \\
Gaussian & 0 & 0.5 & \shortstack{1.000\\{}[1.000, 1.000]} & \shortstack{0.877\\{}[0.877, 0.877]} & \shortstack{0.008\\{}[0.008, 0.009]} & \shortstack{0.00\\{}[-0.02, 0.03]} \\
\end{longtable}\endgroup

%% file: tables/s3d_matrices.tex
\begingroup\small\setlength{\tabcolsep}{3pt}
\begin{longtable}{@{}llrrrl@{}}
\caption{Shapes3D hue-from-pair summaries for every seed, from the unrounded matrices: the Hungarian-assigned mean, the off-assignment mean and the matched hue order (the prescribed order is floor, wall, object).}\label{tab:s3dscores}\\
\toprule
normalization & $\omega$ & seed & assigned & off-assignment & matched hue order \\
\midrule\endfirsthead
\multicolumn{6}{l}{\tablename\ \thetable\ (continued)}\\
\toprule
normalization & $\omega$ & seed & assigned & off-assignment & matched hue order \\
\midrule\endhead
\bottomrule\endfoot
normalized & 0.5 & 0 & 0.31 & -0.12 & floor, wall, object \\
normalized & 0.5 & 1 & 0.64 & -0.16 & floor, wall, object \\
normalized & 0.5 & 2 & 0.67 & -0.17 & floor, wall, object \\
normalized & 0.5 & 3 & 0.38 & -0.01 & floor, wall, object \\
normalized & 0.5 & 4 & 0.67 & -0.15 & floor, wall, object \\
normalized & 0.5 & 5 & 0.36 & -0.10 & floor, wall, object \\
normalized & 0 & 0 & 0.12 & -0.01 & floor, object, wall \\
normalized & 0 & 1 & 0.28 & 0.18 & object, floor, wall \\
normalized & 0 & 2 & 0.10 & -0.01 & object, floor, wall \\
normalized & 0 & 3 & 0.12 & -0.01 & floor, object, wall \\
normalized & 0 & 4 & 0.13 & -0.01 & floor, object, wall \\
normalized & 0 & 5 & 0.08 & -0.03 & floor, wall, object \\
unnormalized & 0.5 & 0 & 0.11 & 0.05 & wall, floor, object \\
unnormalized & 0.5 & 1 & 0.20 & 0.11 & floor, wall, object \\
unnormalized & 0.5 & 2 & 0.19 & 0.07 & floor, wall, object \\
\end{longtable}\endgroup